\documentclass{article}

\PassOptionsToPackage{numbers, compress}{natbib}

\usepackage[preprint]{neurips_2026}

\usepackage[utf8]{inputenc} 
\usepackage[T1]{fontenc}    
\usepackage{url}            
\usepackage{booktabs}       
\usepackage[pagebackref, breaklinks=true, colorlinks, citecolor=bluecite, bookmarks=false]{hyperref}
\usepackage{amsfonts}       
\usepackage{nicefrac}       
\usepackage{microtype}      
\usepackage{xcolor}
\definecolor{bluecite}{HTML}{0071BC}

\usepackage{graphicx}
\usepackage{subcaption}
\usepackage{multirow}

\usepackage{amsmath}

\newtheorem{lemma}{Lemma}[section]
\newtheorem{proof}{Proof}[section]
\usepackage{paralist}

\newtheorem{proposition}{Proposition}
\newtheorem{assumption}{Assumption}

\usepackage[capitalize]{cleveref}
\crefname{section}{Sec.}{Secs.}
\Crefname{section}{Section}{Sections}
\Crefname{table}{Table}{Tables}
\crefname{table}{Tab.}{Tabs.}

\usepackage{microtype}
\usepackage{booktabs} %

\definecolor{darkgreen}{rgb}{0.0, 0.5, 0.0} 
\usepackage{algorithm}
\usepackage{algpseudocode}

\newcommand{\comt}[1]{#1}

\renewcommand{\comt}[1]{}

\usepackage[dvipsnames,table]{xcolor}

\usepackage{tabularx}
\usepackage{multicol}
\definecolor{myblue}{RGB}{235,235,250}

\usepackage{xspace}
\usepackage{pifont}
\usepackage{latexsym}
\usepackage{inconsolata}
\usepackage{arydshln}
\usepackage{enumitem}
\usepackage{wrapfig}
\usepackage{array}
\usepackage{epigraph}
\usepackage{amssymb}
\usepackage[most,skins]{tcolorbox}
\definecolor{lightpink}{RGB}{204, 231, 207} 
\definecolor{lightblue}{RGB}{210, 220, 250} 
\definecolor{lightgray}{RGB}{237, 237, 237} 

\definecolor{superlightred}{rgb}{0.99, 0.92, 0.92}
\definecolor{darkgreen}{RGB}{50,100,0}
\definecolor{darkred}{RGB}{200, 0, 0}
\newcommand{\cmark}{\textcolor{darkgreen}{\ding{51}}} %
\newcommand{\xmark}{\textcolor{darkred}{\ding{55}}} %

\usepackage{makecell}
\usepackage{colortbl}

\usepackage{bbm}

\usepackage{tcolorbox}

\definecolor{myboxcolor}{RGB}{255,255,255} 
\definecolor{myframe}{RGB}{128,128,128} 
\newtcolorbox{mybody}{
  colback=myboxcolor,
  colframe=myframe,
  boxrule=1pt, 
  left=1pt,
  right=1pt,
  top=1pt,
  bottom=1pt,
}

\definecolor{my_green}{RGB}{51,102,0}
\definecolor{my_yellow}{RGB}{255,165,0}
\definecolor{my_red}{RGB}{204, 0, 0}

\definecolor{backred}{RGB}{255, 190, 190}
\definecolor{backblue}{RGB}{210, 230, 250}

\definecolor{shadecolor}{RGB}{237,237,237}

\usepackage[toc]{appendix}
\usepackage{etoc}
\usepackage{minitoc}
\etocsettocstyle{\section*{Contents of Technical Appendices}}{}

\title{Consistency as Inductive Bias: Learning Cross-View Invariance for Robust  Multimodal Reasoning}

\author{%
  Xin Zou$^{1,2}$, Haolin Deng$^{1,2}$, Yibo Yan$^{1,2}$, Shuliang Liu$^{1,2}$, Kening Zheng$^{4}$, \\\textbf{Zhiwei Jin$^{3}$, Chen Chen$^{3}$, Haonan Lu$^{3}$, Xuming Hu$^{1,2}$\thanks{Corresponding author. <dylan.zoux@gmail.com>}}  \\ [2.5pt]
  $^{1}$HKUST (GZ),
  $^{2}$HKUST,
  $^{3}$OPPO AI Center,
  $^{4}$UIC
}

\begin{document}

\maketitle

\etocdepthtag.toc{mtchapter}
\etocsettagdepth{mtchapter}{subsection}
\etocsettagdepth{mtappendix}{none}

\begin{abstract}
Inductive biases steer learning toward generalizable solutions by encoding task structure. In this work, we identify a crucial missing bias in MLLMs: cross-view consistency, \textit{i.e.}, semantically invariant views of the same instance should lead to the same answer. Standard reinforcement learning with verifiable rewards (RLVR) objectives do not impose this constraint, but instead assign pointwise rewards to each visual input. Even with data augmentation (DA), transformed views are typically rewarded independently, providing little signal once within-view rewards saturate. We propose \textbf{ConsistRoll}, a simple but effective method that injects cross-view consistency into RLVR training by reusing the group-sampling mechanism of GRPO. Specifically, ConsistRoll places original and semantically invariant transformed views in the same generation group, and assigns a joint reward only when paired completions are both correct and consistent. In this way, ConsistRoll turns consistency into an online credit-assignment signal, \textbf{without extra generation overhead and annotations}. Theoretically, we show that cross-view consistency is a valid inductive bias, and ConsistRoll introduces a cross-view correction term absent from DA, penalizing view dependence and alleviating advantage collapse. Comprehensive benchmarks across math, general-purpose, hallucination domains confirm that ConsistRoll achieves robust improvements in multimodal reasoning.
\end{abstract}

\section{Introduction}
\label{sec:intro}

\begin{wrapfigure}{r}{0.55\textwidth}
    \centering \vspace{-2.8em}
    \includegraphics[width=1\linewidth, trim={25pt 10pt 25pt 15pt}, clip]{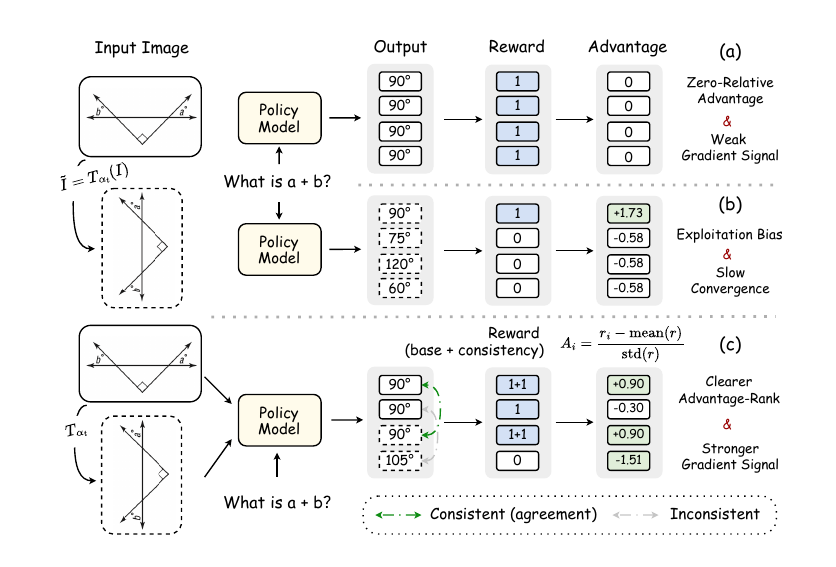} \vspace{-1.75em}
    \caption{(a) Naive GRPO and (b) it with DA strategy rewards views independently, while ConsistRoll turns cross-view agreement into on-policy credit assignment.}
    \label{fig:method-overview}\vspace{-1.6em}
\end{wrapfigure}
Multimodal large language models (MLLMs) have made rapid progress in visual reasoning~\cite{bai2025qwen3,li2024llavaonevision,yin2024survey}, but their reliability remains limited by hallucination and spurious visual grounding~\cite{li2023pope,wang2023amber,guan2024hallusionbench}. Recently, Group Relative Policy Optimization (GRPO)~\cite{shao2024deepseekmath} has become an attractive post-training paradigm for this setting, because many visual reasoning tasks can be evaluated by verifiable answers rather than dense rationale supervision~\cite{guo2025deepseekr1,liu2025visualreasoning}. However, standard GRPO treats each image-question pair as an independent reward unit and normalizes rewards within a single visual view, leaving cross-view consistency outside the learning objective. When the same question-relevant semantics are presented under different semantically invariant views, a reliable model should not only be correct on each view separately, but also produce stable answers across these views. This issue is different from the known zero-advantage case in GRPO, where equal rewards within a group eliminate relative advantages~\cite{zhang2025edge}. Here, as shown in Figure~\ref{fig:method-overview}, the missing signal is cross-view contrast: GRPO can provide valid within-view advantages while still failing to compare original and transformed views. DA~\cite{li2025eagle} improves coverage, but the two views are still rewarded as separate references under a pointwise objective, leaving consistency unoptimized.

Classical vision architectures often encode invariances through their design, \textit{e.g.}, translation equivariance in CNNs~\cite{lecun1989} or permutation invariance in set and graph models~\cite{zaheer2017deepsets}. Modern MLLMs, however, are usually assembled from a visual encoder, a projector, and an autoregressive language model, where visual tokens are serialized, and positional encodings and attention patterns do not naturally guarantee rotation, scale, or viewpoint invariance. Moreover, these models are already pretrained at a large scale, making architectural redesign costly and incompatible with standard post-training pipelines. Therefore, the inductive bias of consistency should be injected into the policy update rather than into the architecture. Recent MLLM reinforcement learning methods~\citep{liu2025visualreasoning,huang2025visionr1,tan2025reasonrft,zhang2025r1vl,shen2025vlmr1} extend GRPO-style RLVR to visual domains. Some methods, such as Share-GRPO and NoisyRollout~\citep{yao2025sharegrpo,liu2025noisyrollout}, perturb inputs to increase rollout diversity, but transformed views remain additional independent samples. Other perception-aware variants, such as PAPO and VPPO~\citep{wang2026papo,huang2026vppo}, use corrupted or masked visual inputs as auxiliary signals to measure visual dependence and shape the original on-policy update, but they still do not jointly optimize answer agreement between equivalent views. We argue for a fundamentally different perspective: a semantically invariant transformation defines a counterfactual view of the same reasoning problem. \textit{To enable on-policy cross-view invariance, rollouts under different views must not be rewarded in isolation, but jointly optimized via a shared reward scheme.}

\begin{figure}[t]
    \centering \vspace{-0.5em}
    \includegraphics[width=1\linewidth]{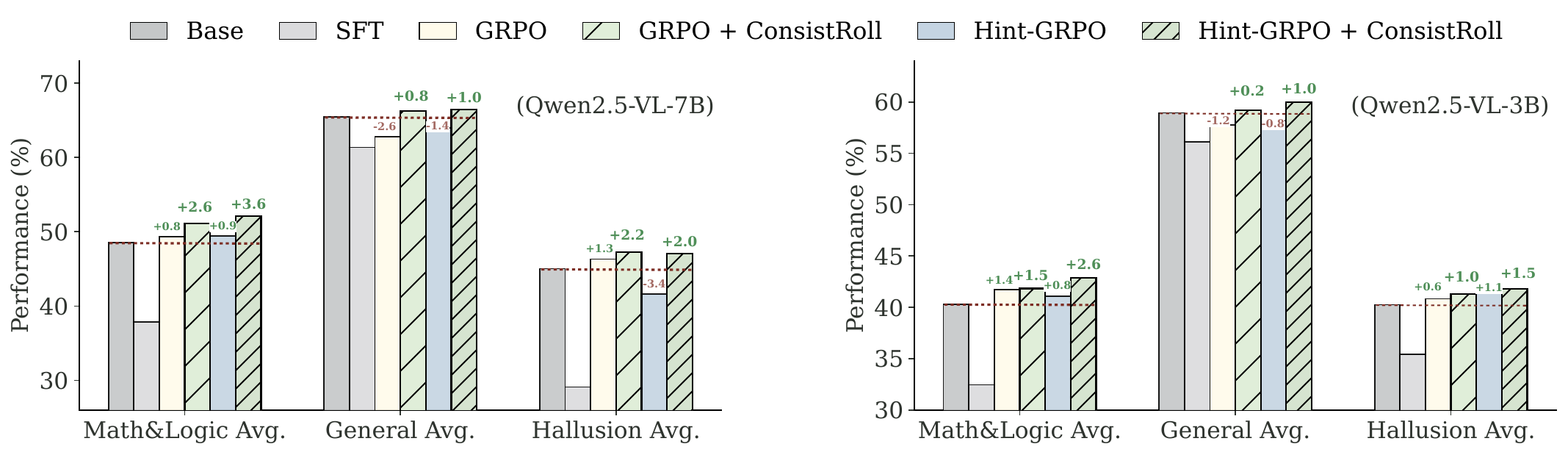}\vspace{-0.75em}
    \caption{
Performance on comprehensive benchmarks. ConsistRoll consistently improves GRPO and Hint-GRPO, showing that cross-view consistency is an effective inductive bias for RLVR training.}
    \label{fig2:performance} \vspace{-1.5em}
\end{figure}
To this end, we propose \textbf{ConsistRoll}, a simple yet effective method that injects cross-view consistency into RLVR training by repurposing the group-sampling mechanism of GRPO. Instead of treating augmented inputs as independent training samples, ConsistRoll places the completions generated from an original image and its transformed counterpart into the same rollout group. It then assigns a joint reward only when both responses are individually correct and consistent. Since the paired views share the same group statistics, their advantages become coupled, yielding the cross-view correction illustrated in Figure~\ref{fig:method-overview}. ConsistRoll activates the consistency branch only for transformations known to preserve the target answer, and otherwise falls back to ordinary pointwise reward optimization. Theoretically, we show that ConsistRoll introduces a cross-view correction term into the policy gradient, a term absent from naive DA, encouraging diverse exploration and robust visual understanding, while preserving the standard GRPO learning signal when no valid consistent pair is available.

Empirically, ConsistRoll delivers consistent improvements across model scales, RL recipes, and benchmark families. As summarized in Figure~\ref{fig2:performance}, adding ConsistRoll improves both GRPO and Hint-GRPO on Qwen2.5-VL-7B and Qwen2.5-VL-3B, spanning math-and-logic reasoning, general multimodal understanding, and hallucination-sensitive evaluation. These gains indicate that cross-view consistency is not merely an augmentation heuristic, but a transferable inductive bias for GRPO-style post-training. The effect is most pronounced when pointwise RLVR already solves many canonical examples but remains unstable under answer-preserving visual transformations. 

Overall, our contributions are summarized as follows:
\vspace{-0.5em}
\begin{itemize}[leftmargin=*]
  \item We identify a view-wise credit assignment gap in multimodal RLVR: standard pointwise rewards optimize per-view correctness but fail to encode consistency across semantically invariant views.
  \item We propose ConsistRoll that jointly rewards cross-view semantically-invariant pairs. Our theoretical analysis shows that cross-view consistency serves as a valid and learnable inductive bias for RLVR.
  \item We extensively validate ConsistRoll across 14 benchmarks, showing consistent improvements in mathematical, logical, and general multimodal reasoning, and stronger cross-view consistency.
\end{itemize}


\section{Related Work}
\label{sec:related}

\textbf{RLVR for multimodal reasoning.}
Reinforcement learning has become a central post-training paradigm for large language models and MLLMs, from RLHF with learned reward models~\citep{ziegler2019rlhf,ouyang2022instructgpt} to RLVR, where rule-based answer checkers provide scalable supervision. GRPO~\citep{shao2024deepseekmath} removes the critic by normalizing rewards within a sampled group, and has been widely adopted in recent reasoning models such as DeepSeek-R1~\citep{guo2025deepseekr1} and Kimi k1.5~\citep{team2025kimi}. This paradigm has quickly moved into the multimodal domain. Visual-RFT~\citep{liu2025visualreasoning}, Vision-R1~\citep{huang2025visionr1}, Reason-RFT~\citep{tan2025reasonrft}, R1-VL~\citep{zhang2025r1vl}, VLM-R1~\citep{shen2025vlmr1}, and Perception-R1~\citep{xiao2025perceptionr1} show that GRPO-style RL can improve visual perception, grounding, and multimodal reasoning. Recent variants further refine rollout construction and reward propagation, such as Share-GRPO~\citep{yao2025sharegrpo} and NoisyRollout~\citep{liu2025noisyrollout}. ConsistRoll is complementary to this line: rather than designing denser rewards or improving exploration alone, it changes the \emph{comparison unit} by placing semantically invariant views within a group, enabling efficient visual-dependent exploration.

\textbf{Augmentation, consistency, and invariance.}
Data augmentation is a standard mechanism for injecting invariances into vision systems, with classical and automated policies such as AutoAugment, RandAugment, and AugMix improving robustness by exposing models to label-preserving transformations~\citep{cubuk2019autoaugment,cubuk2020randaugment,hendrycks2020augmix}. Prediction consistency across perturbations has also been widely studied in semi-supervised and robust learning, including Mean Teacher, Virtual Adversarial Training, and UDA~\citep{tarvainen2017mean,miyato2018vat,xie2020uda}. These methods typically impose consistency through supervised or semi-supervised probability-matching losses. ConsistRoll instead operates in a language-generation RL setting, where answers are sampled, parsed, and scored by verifiable rewards. This distinction is crucial: consistency alone admits degenerate policies that repeatedly output the same wrong answer, while correctness alone reduces to pointwise RLVR. ConsistRoll uses a correctness-aware consistency predicate, turning agreement across equivalent views into a safe online credit-assignment signal.

\textbf{Inductive biases, hallucination, and spurious grounding.}
The broader motivation of ConsistRoll follows the principle that models generalize better when their inductive biases match task symmetries. Classical architectures encode such biases structurally, including translation equivariance in CNNs~\citep{lecun1989}, permutation invariance in set and graph models~\citep{zaheer2017deepsets}, and symmetry constraints in geometric deep learning~\citep{cohen2016group,weiler2019general,bronstein2021geometric}. Modern MLLMs, however, are typically adapted through post-training rather than architectural redesign. ConsistRoll therefore encodes invariance algorithmically through the RL objective. This is particularly relevant for hallucination and spurious visual grounding, which remain central reliability failures in MLLMs~\citep{li2023pope,wang2023amber,guan2024hallusionbench}. Related preference-optimization methods reduce hallucination through human, model-generated, or synthetic preference pairs~\citep{sun2023llavarlhf,yu2024rlhfv,wang2024mdpo,xie2024vdpo,fu2025chip}. In contrast, ConsistRoll targets tasks with verifiable answers and no preference annotations, penalizing view-specific shortcuts through paired, semantically-invariant visual transformations. We provide a more detailed discussion of related work in Appendix~\ref{apx:related}.
\section{ConsistRoll: A Free Inductive Bias for Cross-View Consistency}
\label{sec:method}
ConsistRoll is a lightweight modification of GRPO that injects cross-view consistency into reinforcement learning with verifiable rewards. 
The central idea is simple: if two visual views preserve the answer to the same question, they should not be optimized as unrelated training instances. 
Instead, their rollouts should be compared under a shared reward-normalization group, so that correctness and cross-view stability jointly determine the relative advantage. 
ConsistRoll realizes this idea by changing only the rollout grouping and reward construction, while leaving the model architecture, answer parser, and GRPO optimizer unchanged. Importantly, it keeps the total rollout budget the same as standard GRPO: for an invariant pair, the original view and the transformed view each sample $n/2$ completions, yielding $n$ rollouts in total. Figure~\ref{fig:consistroll-framework} illustrates the framework of our ConsistRoll.

\begin{figure}[h]
    \centering\vspace{-0.5em}
    \includegraphics[width=1\linewidth, trim={18pt 5pt 22pt 8pt}, clip]{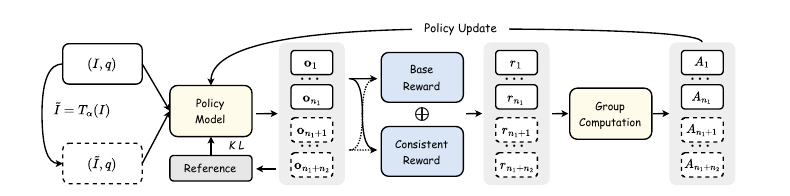}
\caption{ConsistRoll overview. The original and transformed views are sampled within a shared GRPO group. A paired consistency reward is assigned only when the two completions are both correct and consistent, turning cross-view invariance into consistency-aware credit-assignment signals.}\vspace{-0.5em}
\label{fig:consistroll-framework}
\end{figure}

\subsection{Preliminaries: Group Relative Policy Optimization (GRPO)}
\label{sec:grpo-prelim}

GRPO~\citep{shao2024deepseekmath} streamlines the PPO framework~\citep{schulman2017ppo} by eliminating the need for a separate value model. Instead, it estimates advantages by normalizing verifiable rewards within a sampled group. For a given input pair $(I, q)$ consisting of an image and text query, the reference policy $\pi_{\theta_{\mathrm{old}}}$ generates a group of $n$ response rollouts $\{o_1, \dots, o_n\}$. A rule-based parser evaluates each completion, assigning a pointwise outcome reward $r_i = r(I, q, o_i)$, e.g., $1$ for a correct format or answer, $0$ otherwise.
The relative advantage for each rollout is computed by normalizing these rewards: $\hat{A}_i = \frac{r_i - \mathrm{mean}(\mathbf{r})}{\mathrm{std}(\mathbf{r})}$,
and the policy $\pi_\theta$ is then updated via the clipped surrogate GRPO objective:
\begin{equation}
\label{eq:grpo-obj}
\mathcal{J}_{\mathrm{GRPO}}(\theta) = \mathbb{E}\left[ \frac{1}{n} \sum_{i=1}^{n} \min\left( \frac{\pi_{\theta}(o_i \mid I, q)}{\pi_{\theta_{\mathrm{old}}}(o_i \mid I, q)} \hat{A}_i,\, \mathrm{clip}\big(\frac{\pi_{\theta}(o_i \mid I, q)}{\pi_{\theta_{\mathrm{old}}}(o_i \mid I, q)}, 1-\epsilon, 1+\epsilon\big) \hat{A}_i \right) \right],
\end{equation}
where $\epsilon$ controls the clipping range. Following recent practices~\citep{meng2025mm, liu2025understanding}, we omit the explicit KL divergence penalty against a reference model. While highly effective, standard GRPO normalizes rewards strictly within a single visual context $I$, leaving the model blind to cross-view inconsistencies.

\subsection{From Independent Augmentation to View-Coupled Credit Assignment}
A straightforward use of visual transformations in GRPO is to treat transformed images as additional augmented samples. 
Although this increases input coverage, it preserves the pointwise objective: $I$ and its transformed view $\tilde I=T_\alpha(I)$ are evaluated in separate rollout groups, so correctness is rewarded per view rather than across views. 
Consequently, a response that is correct on $I$ but unstable under $\tilde I$ can still be reinforced, because the two outcomes are never compared under a shared reward baseline.
Agreement alone is not sufficient either, since the model can be consistently wrong across views. 
ConsistRoll therefore introduces a correctness-gated, view-coupled signal: semantically invariant views are placed in the same GRPO group, their rewards are normalized jointly, and agreement is rewarded only when both predictions are correct. 
In this way, cross-view invariance becomes a training bias in the policy update, rather than an incidental effect of data augmentation.

\subsection{Learning with Cross-View Invariance as a Training Bias}
\label{sec:consistroll}
For a semantically-invariant sample $(I,q,y^*)\in\mathcal S_{\mathrm{inv}}$, ConsistRoll constructs a rotated view $\tilde I=T_\alpha(I)$ that preserves the question-relevant answer. 
We denote the two visual contexts as $x_0=(I,q)$ and $x_1=(\tilde I,q)$. 
Under the same total rollout budget as GRPO, the old policy samples $n/2$ completions from each view, i.e., $o_{v,i}\sim\pi_{\theta_{\mathrm{old}}}(\cdot\mid x_v)$ for $v\in\{0,1\}$ and $i\in[n/2]$. 
The resulting $n$ completions are treated as one coupled GRPO group, so the original and rotated views share the same reward statistics rather than forming two independent normalization groups.

Let $\hat a(\cdot)$ be the answer parser and let $\kappa(o_{0,i},o_{1,i})$ indicate whether the paired completions give the same parsed answer. 
For each paired index $i$, ConsistRoll assigns the following correctness-gated consistency reward to both views:
\begin{equation}
\label{eq:consistroll-reward}
r_{v,i}=
\mathbbm{1}\!\left[\hat a(o_{v,i})=y^*\right]+
\lambda\,
\mathbbm{1}\!\left[
\hat a(o_{0,i})=y^*
\land
\kappa(o_{0,i},o_{1,i})=1
\right],
\quad
v\in\{0,1\},\; i\in[n/2].
\end{equation}
The first term is the standard verifiable correctness reward. 
The second term is a shared paired bonus: it is activated only when both completions are correct and answer-consistent. 
This conjunctive design rules out the degenerate shortcut of producing the same incorrect answer across views, while still encouraging the policy to preserve its answer under semantically-invariant transformations.

Given the coupled reward group $\mathcal C=\{{r_{v,i}\mid v\in{0,1},i\in[n/2]}$\}, ConsistRoll computes normalized advantages over all $n$ rollouts and optimizes the same clipped surrogate form as GRPO:
\begin{equation}
\label{eq:consistroll-objective}
\mathcal{J}(\theta)=
\mathbb{E}\Bigg[\frac{1}{n}\sum_{v=0}^{1}\sum_{i=1}^{n/2}\min\left(
\frac{\pi_{\theta}(o_{v,i}\mid x_v)}
{\pi_{\theta_{\mathrm{old}}}(o_{v,i}\mid x_v)}
\tilde A_{v,i},
\mathrm{clip}\big(
\frac{\pi_{\theta}(o_{v,i}\mid x_v)}
{\pi_{\theta_{\mathrm{old}}}(o_{v,i}\mid x_v)},
1-\epsilon,
1+\epsilon
\big)
\tilde A_{v,i}
\right)\Bigg],
\end{equation}
where
$\tilde A_{v,i}=(r_{v,i}-\mu_{\mathcal C})/(\sigma_{\mathcal C}+\delta)$, 
$\mu_{\mathcal C}=\frac{1}{n}\sum_{v=0}^{1}\sum_{i=1}^{n/2}r_{v,i}$, and 
$\sigma_{\mathcal C}$ is the standard deviation of the same coupled reward group. 
ConsistRoll keeps the GRPO optimizer unchanged, but shifts credit assignment from a single visual context to an answer-preserving view pair. 
Rollouts are jointly normalized with their rotated-view counterparts and receive additional advantage only when both paired responses are correct and consistent. 
For non-invariant samples $(I,q,y^*)\in\mathcal S_{\mathrm{ninv}}$, ConsistRoll disables the rotated branch and exactly recovers standard single-view GRPO with $n$ rollouts.
Algorithm~\ref{alg:consistroll} summarizes the full training procedure of the proposed ConsistRoll.

\begin{algorithm}[t]
\caption{ConsistRoll: Consistency-aware Reinforcement Fine-Tuning}
\label{alg:consistroll}
\begin{algorithmic}[1]
\State \textbf{Input:} Policy $\pi_{\theta}$, old policy $\pi_{\theta_{\mathrm{old}}}$, dataset $p_{\mathcal D}$, training steps $t_{\max}$, rollout number $n$, clip range $\epsilon$, consistency weight $\lambda$, stability constant $\delta$, semantically-invariant transformation $T_\alpha(\cdot)$.
\For{$t=1$ to $t_{\max}$}
    \State Sample a batch of training instances $(I,q,y^*)\sim p_{\mathcal D}$.
    \For{each $(I,q,y^*)$ in the batch}
        \If{$(I,q,y^*)\in\mathcal S_{\mathrm{inv}}$ (semantically-invariant)}
            \State Construct the transformed view $\tilde I=T_\alpha(I)$.
            \State Sample paired rollouts $o_{0,i}\sim\pi_{\theta_{\mathrm{old}}}(\cdot\mid I,q)$, $o_{1,i}\sim\pi_{\theta_{\mathrm{old}}}(\cdot\mid\tilde I,q)$ for $i\in[n/2]$.
            \State Compute $r^a_{v,i}=\mathbbm{1}[\hat a(o_{v,i})=y^*]$ for $v\in\{0,1\}$.            \State Compute $b_i=\mathbbm{1}[a(o_{0,i})=y^*\land\kappa(o_{0,i},o_{1,i})=1]$.
            \State Set $r_{v,i}=r^a_{v,i}+\lambda b_i$ for $v\in\{0,1\}$.
            \State Normalize $\{r_{0,i},r_{1,i}\}_{i=1}^{n/2}$ jointly to obtain view-coupled advantages.
        \Else
            \State Sample single-view rollouts $o_i\sim\pi_{\theta_{\mathrm{old}}}(\cdot\mid I,q)$ for $i\in[n]$.
            \State Compute $r_i=\mathbbm{1}[\hat a(o_i)=y^*]$.
            \State Normalize $\{r_i\}_{i=1}^{n}$ to obtain standard GRPO advantages.
        \EndIf
    \EndFor
    \State Update $\pi_\theta$ by maximizing the clipped ConsistRoll objective in Eq.~\eqref{eq:consistroll-objective}.
    \State Update $\theta_{\mathrm{old}}\leftarrow\theta$.
\EndFor
\end{algorithmic}
\end{algorithm}

\subsection{Theoretical Analysis}
\label{sec:theory}

We analyze whether cross-view consistency is a \emph{valid and learnable inductive bias} for RLVR-trained MLLMs. 
Here, validity means that the bias is aligned with the answer-preserving target and does not introduce a wrong-but-consistent optimum; learnability means that the bias appears as an estimable policy-gradient signal under RL optimization. 
For clarity, the analysis uses the unclipped population objective; the implemented algorithm keeps the clipped GRPO surrogate in Eq.~\eqref{eq:consistroll-objective}. Full derivations, finite-sample advantage analysis, and weaker distributional compatibility are given in Appendix~\ref{app:proofs}.
\begin{assumption}[Semantically-invariant views]
\label{asm:semantic-invariance}
For each sample $(I_i,q_i,y_i^*)$, ConsistRoll uses views
$\mathcal V_i=\{T_{\alpha_m}(I_i)\}_{m=1}^{K}$ that preserve the target answer:
\begin{equation}
    f^*(T_{\alpha_m}(I_i),q_i)=y_i^*,\quad \forall m\in[K].
\end{equation}
\end{assumption}

For a view pair $x_0=(I,q)$ and $x_1=(T_\alpha(I),q)$, let
$c_v(o)=\mathbbm{1}[\hat a(o)=y^*]$ and let $\kappa(o_0,o_1)$ indicate parsed-answer agreement.
ConsistRoll defines
\begin{equation}
\label{eq:theory-bonus}
    b(o_0,o_1)=c_0(o_0)c_1(o_1)\mathbbm{1}[\kappa(o_0,o_1)=1],
    \quad o_v\sim\pi_\theta(\cdot\mid x_v),
\end{equation}
which rewards only paired completions that are both correct and answer-consistent.
\begin{proposition}[Cross-view consistency as a valid and learnable bias]
\label{prop:view-coupled-policy-improvement}
Define the unclipped ConsistRoll population objective
\begin{equation}
\label{eq:cr-main-objective}
    \mathcal J_{\mathrm{ConsistRoll}}(\theta)
    =
    \mathbb E_{o_0,o_1\sim\pi_\theta}
    \left[
    \frac{1}{2}\big(c_0(o_0)+c_1(o_1)\big)+\lambda b(o_0,o_1)
    \right].
\end{equation}
Assume that the parser canonicalizes answers and define 
$p_v(\theta)=\Pr_{o\sim\pi_\theta(\cdot\mid x_v)}[\hat a(o)=y^*]$. 
Then
\begin{equation}
\label{eq:cr-validity-objective}
    \mathcal J_{\mathrm{CR}}(\theta)
    =
    \frac{1}{2}\big(p_0(\theta)+p_1(\theta)\big)+\lambda p_0(\theta)p_1(\theta),
    \;
    \frac{\partial \mathcal J_{\mathrm{CR}}}{\partial p_v}=\frac{1}{2}+\lambda p_{1-v}>0 .
\end{equation}
Thus, at the probability level, the consistency bias is aligned with the answer-preserving target: if the policy class can realize $p_0=p_1=1$, the global maximizer is attained when both views are correct, while wrong-but-consistent answers receive no bonus. 
The bias is learnable through the RL update as
\begin{equation}
\label{eq:cr-gradient-correction}
    \nabla_\theta\mathcal J_{\mathrm{CR}}(\theta)
    =
    \nabla_\theta\mathcal J_{\mathrm{DA}}(\theta)
    +
    \lambda\left(
    p_1(\theta)\nabla_\theta p_0(\theta)
    +
    p_0(\theta)\nabla_\theta p_1(\theta)
    \right),
    \quad
    \mathcal J_{\mathrm{DA}}=\frac{p_0+p_1}{2}.
\end{equation}
If $\mathcal J_{\mathrm{CR}}$ is $L$-smooth and upper bounded, and stochastic gradient ascent uses an unbiased policy-gradient estimator $g_t$ with variance at most $\sigma^2$ and step size $\eta\leq 1/L$, then
\begin{equation}
\label{eq:cr-convergence}
    \frac{1}{T}\sum_{t=0}^{T-1}
    \mathbb E\!\left[\|\nabla_\theta\mathcal J_{\mathrm{CR}}(\theta_t)\|^2\right]
    \leq
    \frac{2(\mathcal J_{\mathrm{CR}}^*-\mathcal J_{\mathrm{CR}}(\theta_0))}{\eta T}
    +L\eta\sigma^2 .
\end{equation}
Choosing $\eta=\mathcal O(T^{-1/2})$ gives the standard $\mathcal O(T^{-1/2})$ convergence rate to first-order stationarity.
\end{proposition}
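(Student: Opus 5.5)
The proof has three parts: a closed-form reduction of the objective, a product-rule computation for the gradient, and a standard nonconvex SGD argument for the rate.

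\textbf{Step 1: closed form.} The bonus factorizes because $o_0$ and $o_1$ are sampled independently from $\pi_\theta(\cdot\mid x_0)$ and $\pi_\theta(\cdot\mid x_1)$. Since the parser canonicalizes answers, whenever $c_0(o_0)=c_1(o_1)=1$ we have $\hat a(o_0)=\hat a(o_1)=y^*$, so $\kappa(o_0,o_1)=1$ holds automatically. Hence $b(o_0,o_1)=c_0(o_0)c_1(o_1)$ almost surely. Independence then gives $\mathbb E[b]=p_0p_1$, and linearity of expectation gives $\mathbb E[c_v]=p_v$. Together these yield the closed form in Eq.~\eqref{eq:cr-validity-objective}. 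The partial derivatives $\tfrac12+\lambda p_{1-v}$ are strictly positive for $\lambda\ge 0$.

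\textbf{Step 2: validity.} On $[0,1]^2$ the objective is coordinatewise strictly increasing, so its unique maximizer is $(1,1)$, with value $1+\lambda$. If the policy class realizes $p_0=p_1=1$, every global maximizer of $\mathcal J_{\mathrm{CR}}$ must therefore attain $p_0=p_1=1$. A wrong-but-consistent pair has $c_0c_1=0$, so it contributes nothing to the bonus term; this settles the qualitative claims.

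\textbf{Step 3: learnability.} The gradient decomposition Eq.~\eqref{eq:cr-gradient-correction} follows from the chain and product rules applied to $\frac12(p_0+p_1)+\lambda p_0p_1$. To show the correction term is estimable, I would write $\nabla_\theta p_v=\mathbb E[c_v(o)\nabla_\theta\log\pi_\theta(o\mid x_v)]$ via the score-function identity. It follows that
\[
\lambda\,\mathbb E\big[b(o_0,o_1)\big(\nabla\log\pi_\theta(o_0\mid x_0)+\nabla\log\pi_\theta(o_1\mid x_1)\big)\big]
\]
is an unbiased estimator of the correction. This is exactly what the paired reward in Eq.~\eqref{eq:consistroll-reward} supplies in expectation; the group baseline contributes zero mean because $\mathbb E[\nabla\log\pi]=0$.

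\textbf{Step 4: convergence.} I would apply the standard nonconvex SGD argument. $L$-smoothness gives
\[
\mathcal J(\theta_{t+1})\ge \mathcal J(\theta_t)+\eta\langle\nabla\mathcal J,g_t\rangle-\tfrac{L\eta^2}{2}\|g_t\|^2 .
\]
Taking conditional expectations, using unbiasedness, and using $\mathbb E\|g_t\|^2\le\|\nabla\mathcal J\|^2+\sigma^2$ together with $\eta\le1/L$, this becomes
\[
\mathbb E\,\mathcal J(\theta_{t+1})\ge \mathbb E\,\mathcal J(\theta_t)+\tfrac{\eta}{2}\mathbb E\|\nabla\mathcal J\|^2-\tfrac{L\eta^2\sigma^2}{2}.
\]
Telescoping over $t=0,\dots,T-1$, bounding $\mathcal J(\theta_T)\le\mathcal J^*$, and dividing by $\eta T/2$ gives Eq.~\eqref{eq:cr-convergence}. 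Setting $\eta=c/\sqrt T$, with $T$ large enough that $\eta\le 1/L$, yields the $\mathcal O(T^{-1/2})$ rate.

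\textbf{Main obstacle.} The only delicate point is Step 1: showing that $b$ reduces to $c_0c_1$. This requires the canonicalization assumption, so that two correct answers are recognized as equal by $\kappa$. It also requires the rollouts from the two views to be conditionally independent given $\theta$. The implemented reward in Eq.~\eqref{eq:consistroll-reward} gates only on $o_{0,i}$ being correct, but that gate combined with $\kappa$ implies both completions are correct. I would state this equivalence explicitly before proceeding.
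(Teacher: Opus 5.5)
Your proposal is correct and follows essentially the same route as the paper: the canonical-parser reduction $b=c_0c_1$ plus conditional independence gives the closed form, and the learnability claim comes from differentiating that closed form together with the product-rule/score-function estimator, with the standard $L$-smoothness telescoping SGD argument for the rate (your explicit observation that $c_0\kappa=c_0c_1$ reconciles the implemented reward with $b$ is a useful addition). One caveat concerns a non-essential aside: the pooled GRPO baseline term is not mean-zero here, because $\tilde\mu$ contains the sample's own reward and also the partner reward $\tilde r_{1,i}$, which depends on $o_{0,i}$ through the shared bonus $b_i$, and $\tilde\sigma$ is data-dependent; so drop that claim and, as the paper does, establish learnability only for the idealized unclipped estimator.
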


Proposition~\ref{prop:view-coupled-policy-improvement} shows that cross-view consistency is a valid and learnable inductive bias, not a post-hoc averaging constraint. 
It aligns the optimum with the invariant Bayes target, avoids consistently wrong solutions, and changes RL optimization by amplifying each view's gradient with the success probability of its paired view. 
Thus, semantically-invariant transformations become an explicit cross-view credit-assignment signal, while the induced stochastic optimization still enjoys standard stationary-point convergence under smoothness and bounded-variance assumptions.

\renewcommand{\multirowsetup}{\centering}
\definecolor{mygray}{gray}{.92}
\definecolor{mygreen1}{RGB}{253, 244, 244}
\definecolor{mygreen2}{RGB}{238, 243, 243}
\definecolor{ForestGreen}{RGB}{34,139,34}
\definecolor{Forestred}{RGB}{220,50,50}
\newcommand{\fg}[1]{\mathbf{\mathcolor{ForestGreen}{#1}}}
\newcommand{\fr}[1]{\mathbf{\mathcolor{Forestred}{#1}}}

\newcommand{\cc}{\cellcolor{gray!10}}

\begin{table}[t]
    \centering \vspace{-1.5em}
    \setlength{\tabcolsep}{3.4pt}
    \footnotesize
    \caption{Results on mathematical, logical, and part of general reasoning benchmarks. ConsistRoll consistently improves GRPO and Hint-GRPO across both 3B and 7B backbones. Colored numbers denote absolute changes over the baseline, where $\fr{red}$ denotes decrease, and $\fg{green}$ denotes increase.}
    \vspace{0.5em}
    \label{tab1:main}
    \resizebox{\linewidth}{!}{
    \begin{tabular}{l *{12}{>{\centering\arraybackslash}p{2.8em}}}
        \toprule[1.25pt]
        \multirow{2}{*}{\textbf{Methods}}  
        & \multirow{2}{*}{\makecell[l]{\textbf{Math-}\\ \textbf{Verse}}} 
        & \multirow{2}{*}{\makecell[l]{\textbf{Math-}\\ \textbf{Vision}}} 
        & \multirow{2}{*}{\makecell[l]{\textbf{Math-}\\ \textbf{Vista}}} 
        & \multirow{2}{*}{\makecell[l]{\textbf{Logic-}\\ \textbf{Vista}}} 
        & \multirow{2}{*}{\makecell[c]{\textbf{MMLU}\\ \textbf{Math}}} 
        & \multirow{2}{*}{\makecell[c]{\textbf{JM$^{3}$U}\\ \textbf{-Math}}} 
        & \multicolumn{2}{c}{\textbf{MMStar}} 
        & \multirow{2}{*}{\makecell[c]{\textbf{Dyna-}\\ \textbf{Math}}} 
        & \multirow{2}{*}{\makecell[c]{\textbf{Math}\\ \textbf{Avg.}}} 
        & \multicolumn{2}{c}{\textbf{JMMMU Pro}} \\
        \cmidrule[0.5pt](lr){8-9} 
        \cmidrule[0.5pt](lr){12-13} 
        & & & & & & & \textbf{Logical} & \textbf{Math} & & & \textbf{Std.} & \textbf{Vision}\\
        \midrule
        Qwen2.5-VL-3B 
        & 29.8 & 23.0 & 63.6 & 37.5 & 44.5 & 33.3 & 54.1 & 61.0 & 32.4 
        & 42.1 & 42.6 & 38.8\\
        \midrule
        + SFT 
        & 28.1 & 14.1 & 60.7 & 28.8 & 33.3 & 40.0 & 55.2 & 43.2 & 33.8 
        & {37.5{\tiny$\;\fr{4.6}$}} 
        & 35.4 & 34.5\\
        + GRPO \cite{shao2024deepseekmath} 
        & 31.4 & 21.1 & 64.6 & 36.8 & 46.7 & 36.7 & 53.7 & 63.0 & 33.6 
        & {43.1{\tiny$\;\fg{1.0}$}} 
        & 42.2 & 38.1\\
        \cc + ConsistRoll \scriptsize{(Ours)} 
        & \cc 32.4 & \cc 24.7 & \cc 64.4 & \cc 36.6 & \cc 47.2 & \cc 36.7 
        & \cc 56.1 & \cc 58.5 & \cc 34.5 
        & \cc {43.5{\tiny$\;\fg{1.4}$}} 
        & \cc 43.2 & \cc 39.9\\
        + Hint-GRPO \cite{huang2025boosting} 
        & 32.0 & 21.7 & 65.2 & 36.4 & 46.9 & 36.7 & 56.7 & 61.8 & 32.3 
        & {43.3{\tiny$\;\fg{1.2}$}} 
        & 39.5 & 38.3\\
        \cc + ConsistRoll \scriptsize{(Ours)} 
        & \cc 32.3 & \cc 22.4 & \cc 65.2 & \cc 36.6 & \cc 47.6 & \cc 43.3 
        & \cc 58.5 & \cc 63.7 & \cc 36.8 
        & \cc {45.2{\tiny$\;\fg{3.1}$}} 
        & \cc 42.3 & \cc 38.7\\
        \midrule
        Qwen2.5-VL-7B 
        & 52.4 & 26.6 & 68.6 & 42.0 & 51.0 & 36.7 & 61.0 & 66.3 & 47.9 
        & 50.3 & 46.1 & 45.2\\
        \midrule
        + SFT 
        & 34.3 & 13.2 & 64.7 & 27.7 & 37.8 & 33.3 & 58.9 & 59.8 & 43.9 
        & {41.5{\tiny$\;\fr{8.8}$}} 
        & 41.7 & 42.7\\
        + GRPO \cite{shao2024deepseekmath} 
        & 53.6 & 27.6 & 68.0 & 42.0 & 52.5 & 33.3 & 60.8 & 65.5 & 51.9 
        & {50.6{\tiny$\;\fg{0.3}$}} 
        & 46.3 & 46.4\\
        \cc + ConsistRoll \scriptsize{(Ours)} 
        & \cc 54.2 & \cc 28.9 & \cc 69.2 & \cc 45.8 & \cc 53.2 & \cc 40.0 
        & \cc 63.4 & \cc 67.8 & \cc 52.6 
        & \cc {52.8{\tiny$\;\fg{2.5}$}} 
        & \cc 46.7 & \cc 46.4\\
        + Hint-GRPO \cite{huang2025boosting} 
        & 53.6 & 28.6 & 68.5 & 43.8 & 52.8 & 30.0 & 63.7 & 66.4 & 48.0 
        & {50.6{\tiny$\;\fg{0.3}$}} 
        & 46.2 & 46.3\\
        \cc + ConsistRoll \scriptsize{(Ours)} 
        & \cc 54.4 & \cc 30.6 & \cc 69.0 & \cc 46.7 & \cc 53.7 & \cc 46.7 
        & \cc 65.2 & \cc 66.9 & \cc 50.4 
        & \cc {53.7{\tiny$\;\fg{3.4}$}} 
        & \cc 47.3 & \cc 45.5\\
        \midrule
        Qwen3-VL-4B 
        & 46.8 & 51.6 & 73.7 & 53.2 & 48.2 & 38.5 & 69.8 & 65.3 & 58.1 
        & 56.1 & 48.0 & 46.5\\
        \midrule
        + SFT 
        & 35.6 & 31.4 & 68.2 & 39.5 & 36.8 & 35.0 & 64.2 & 49.5 & 48.2 
        & {45.4{\tiny$\;\fr{10.7}$}} 
        & 42.5 & 41.2\\
        + GRPO \cite{shao2024deepseekmath} 
        & 47.2 & 52.0 & 74.3 & 53.5 & 50.6 & 40.0 & 70.2 & 66.5 & 60.1 
        & {57.1{\tiny$\;\fg{1.1}$}} 
        & 48.5 & 47.2\\
        \cc + ConsistRoll \scriptsize{(Ours)} 
        & \cc 48.5 & \cc 54.3 & \cc 75.6 & \cc 55.4 & \cc 51.2 & \cc 42.3 
        & \cc 71.8 & \cc 67.9 & \cc 61.2 
        & \cc {58.6{\tiny$\;\fg{2.5}$}} 
        & \cc 49.2 & \cc 47.9\\
        + Hint-GRPO \cite{huang2025boosting} 
        & 47.0 & 53.1 & 74.8 & 54.0 & 50.8 & 38.3 & 71.5 & 67.1 & 59.4 
        & {57.3{\tiny$\;\fg{1.2}$}} 
        & 48.4 & 47.0\\
        \cc + ConsistRoll \scriptsize{(Ours)} 
        & \cc 49.1 & \cc 55.4 & \cc 75.1 & \cc 56.2 & \cc 51.8 & \cc 45.0 
        & \cc 72.4 & \cc 67.3 & \cc 60.5 
        & \cc {59.3{\tiny$\;\fg{3.2}$}} 
        & \cc 49.7 & \cc 47.4\\
        \bottomrule[1.25pt]
    \end{tabular}
    }\vspace{-1.5em}
\end{table}

\section{Experiments}
\label{sec:experiments}

\subsection{Experimental Setup}

\textbf{Implementation and training data.}
Our training on geometry-centered multimodal reasoning, where answer-preserving visual transformations are naturally defined and verifiable rewards can be reliably computed. We use Qwen2.5-VL-3B and Qwen2.5-VL-7B as backbone models. Following standard RLVR practice, the model is prompted to generate responses in the format of [\texttt{<think>Reasoning Steps</think>}, \texttt{<answer>Reasoning Result</answer>}]. We train for one epoch on 8,082 math\&logic-related samples extracted from LLaVA-CoT \cite{xu2025llava-cot}, using AdamW with a learning rate of $5\times10^{-5}$. We set the total rollout number to $n=4$. Training is conducted on 8 H20 GPUs with DeepSpeed ZeRO-3, vLLM is used for rollout generation, with one GPU allocated to generation and seven GPUs to policy training. Unless otherwise specified, we set the consistency weight to $\lambda=0.5$. In practice, we use a 90° rotation (or -90°) as the default transformation, since it naturally preserves the answer in geometric reasoning tasks while transforming the visual features.

\textbf{Baselines.}
We compare against the base model, an SFT baseline, standard GRPO~\citep{shao2024deepseekmath}, and Hint-GRPO~\citep{huang2025boosting}. ConsistRoll is applied as a plug-and-play extension to both GRPO and Hint-GRPO. Our design isolates whether cross-view consistency provides gains beyond pointwise RLVR and hint-enhanced reward shaping. All methods use the same backbone, training data, answer parser, and response format, and ConsistRoll changes only the rollout grouping and reward computation.

\begin{table}[t]
    \centering \vspace{-1.5em}
    \setlength{\tabcolsep}{3.5pt}
    \footnotesize
    \caption{
Results on general and hallucination benchmarks. 
ConsistRoll improves both general and hallucination-related accuracy, showing that cross-view consistency provides a robust training signal. 
}
    \vspace{0.25em}
    \label{tab2:main}
    \resizebox{\linewidth}{!}{
    \begin{tabular}{l  *{12}{>{\centering\arraybackslash}p{2.8em}}}
        \toprule[1.25pt]
        \multirow{2}{*}{\textbf{Methods}}  
        & \multirow{2}{*}{\makecell[c]{\textbf{GQA}}} 
        & \multirow{2}{*}{\makecell[c]{\textbf{MMB}}} 
        & \textbf{MMStar} 
        & \multicolumn{4}{c}{\textbf{LLaVA in the wild}} 
        & \multirow{2}{*}{\textbf{\makecell{General\\Avg.}}}
        & \multicolumn{3}{c}{\textbf{HallusionBench}} 
        & \multirow{2}{*}{\textbf{\makecell{Hallu. \\ Avg.}}} \\
        \cmidrule[0.5pt](lr){4-4}  
        \cmidrule[0.5pt](lr){5-8} 
        \cmidrule[0.5pt](lr){10-12}
        & & & \textbf{Avg.} 
        & \textbf{all} & \textbf{complex} & \textbf{conv} & \textbf{detail} 
        & 
        & \textbf{aAcc} & \textbf{fAcc} & \textbf{qAcc} 
        & \\
        \midrule
        Qwen2.5-VL-3B 
        & 60.0 & 77.1 & 55.8 & 65.6 & 65.4 & 84.9 & 45.3 
        & 64.9 
        & 57.7 & 33.0 & 30.1 
        & 40.3 \\
        \midrule
        + SFT 
        & 58.8 & 76.9 & 51.4 & 62.0 & 56.2 & 78.6 & 51.9 
        & {62.3{\tiny$\;\fr{2.6}$}} 
        & 51.2 & 28.9 & 26.1 
        & {35.4{\tiny$\;\fr{4.9}$}} \\
        + GRPO \cite{shao2024deepseekmath} 
        & 60.5 & 77.9 & 57.0 & 64.4 & 67.9 & 71.9 & 44.2 
        & {63.4{\tiny$\;\fr{1.5}$}} 
        & 57.5 & 33.8 & 31.2 
        & {40.8{\tiny$\;\fg{0.5}$}} \\
        \cc + ConsistRoll \scriptsize{(Ours)} 
        & \cc 60.4 & \cc 77.1 & \cc 56.0 & \cc 64.6 & \cc 61.3 & \cc 86.8 & \cc 46.5 
        & \cc {64.7{\tiny$\;\fr{0.2}$}} 
        & \cc 58.6 & \cc 33.2 & \cc 32.0 
        & \cc {41.3{\tiny$\;\fg{1.0}$}} \\
        + Hint-GRPO \cite{huang2025boosting} 
        & 60.3 & 77.1 & 56.8 & 63.7 & 63.4 & 82.4 & 42.2 
        & {63.7{\tiny$\;\fr{1.2}$}} 
        & 57.7 & 34.1 & 32.3 
        & {41.4{\tiny$\;\fg{1.1}$}} \\
        \cc + ConsistRoll \scriptsize{(Ours)} 
        & \cc 60.3 & \cc 77.7 & \cc 57.5 & \cc 66.9 & \cc 67.6 & \cc 81.8 & \cc 48.6 
        & \cc {65.8{\tiny$\;\fg{0.9}$}} 
        & \cc 58.4 & \cc 34.7 & \cc 32.3 
        & \cc {41.8{\tiny$\;\fg{1.5}$}} \\
        \midrule
        Qwen2.5-VL-7B 
        & 60.8 & 82.8 & 62.1 & 75.8 & 85.9 & 83.0 & 51.9 
        & 71.8 
        & 62.7 & 36.4 & 36.0 
        & 45.0 \\
        \midrule
        + SFT 
        & 59.0 & 81.7 & 59.5 & 68.7 & 72.0 & 81.4 & 48.1 
        & {67.2{\tiny$\;\fr{4.6}$}} 
        & 44.5 & 24.6 & 18.2 
        & {29.1{\tiny$\;\fr{16.}$}} \\
        + GRPO \cite{shao2024deepseekmath} 
        & 61.0 & 83.2 & 61.5 & 68.3 & 72.7 & 78.8 & 49.3 
        & {67.8{\tiny$\;\fr{4.0}$}} 
        & 63.4 & 38.4 & 37.1 
        & {46.3{\tiny$\;\fg{1.3}$}} \\
        \cc + ConsistRoll \scriptsize{(Ours)} 
        & \cc 60.8 & \cc 83.2 & \cc 63.2 & \cc 76.1 & \cc 82.4 & \cc 85.6 & \cc 54.9 
        & \cc {72.3{\tiny$\;\fg{0.5}$}} 
        & \cc 63.9 & \cc 39.6 & \cc 38.2 
        & \cc {47.3{\tiny$\;\fg{2.3}$}} \\
        + Hint-GRPO \cite{huang2025boosting} 
        & 60.9 & 83.0 & 62.6 & 71.6 & 87.9 & 73.9 & 44.1 
        & {69.1{\tiny$\;\fr{2.7}$}} 
        & 55.6 & 32.5 & 36.6 
        & {41.6{\tiny$\;\fr{3.4}$}} \\
        \cc + ConsistRoll \scriptsize{(Ours)} 
        & \cc 61.1 & \cc 83.3 & \cc 63.4 & \cc 76.3 & \cc 86.5 & \cc 83.7 & \cc 52.2 
        & \cc {72.4{\tiny$\;\fg{0.6}$}} 
        & \cc 63.8 & \cc 39.6 & \cc 37.8 
        & \cc {47.1{\tiny$\;\fg{2.1}$}} \\
        \midrule
        Qwen3-VL-4B (Instruct) 
        & 62.5 & 85.2 & 69.8 & 81.2 & 89.4 & 87.5 & 55.3 
        & 75.8 
        & 67.5 & 41.2 & 40.4 
        & 49.7 \\
        \midrule
        + SFT 
        & 60.2 & 83.5 & 64.1 & 74.3 & 76.8 & 83.0 & 51.5 
        & {71.1{\tiny$\;\fr{4.7}$}} 
        & 51.3 & 29.8 & 24.5 
        & {35.2{\tiny$\;\fr{14.}$}} \\
        + GRPO \cite{shao2024deepseekmath} 
        & 62.8 & 85.5 & 69.0 & 75.4 & 78.2 & 82.3 & 53.0 
        & {72.3{\tiny$\;\fr{3.5}$}} 
        & 68.2 & 42.5 & 41.3 
        & {50.7{\tiny$\;\fg{1.0}$}} \\
        \cc + ConsistRoll \scriptsize{(Ours)} 
        & \cc 62.6 & \cc 85.7 & \cc 70.8 & \cc 81.9 & \cc 88.3 & \cc 89.2 & \cc 58.4 
        & \cc {76.6{\tiny$\;\fg{0.8}$}} 
        & \cc 68.9 & \cc 43.8 & \cc 42.6 
        & \cc {51.8{\tiny$\;\fg{2.1}$}} \\
        + Hint-GRPO \cite{huang2025boosting} 
        & 62.6 & 85.4 & 70.0 & 77.1 & 91.0 & 78.5 & 48.2 
        & {73.3{\tiny$\;\fr{2.5}$}} 
        & 61.2 & 36.8 & 39.7 
        & {45.9{\tiny$\;\fr{3.8}$}} \\
        \cc + ConsistRoll \scriptsize{(Ours)} 
        & \cc 62.9 & \cc 85.8 & \cc 71.2 & \cc 82.4 & \cc 90.2 & \cc 88.0 & \cc 56.1 
        & \cc {76.7{\tiny$\;\fg{0.9}$}} 
        & \cc 68.8 & \cc 44.0 & \cc 42.1 
        & \cc {51.6{\tiny$\;\fg{1.9}$}} \\
        \bottomrule[1.25pt]
    \end{tabular}
    }\vspace{-0.5em}
\end{table}
\begin{figure}[t]
    \centering\vspace{-0.25em}
    \begin{subfigure}[b]{0.475\textwidth}
        \centering
        \includegraphics[width=\textwidth]{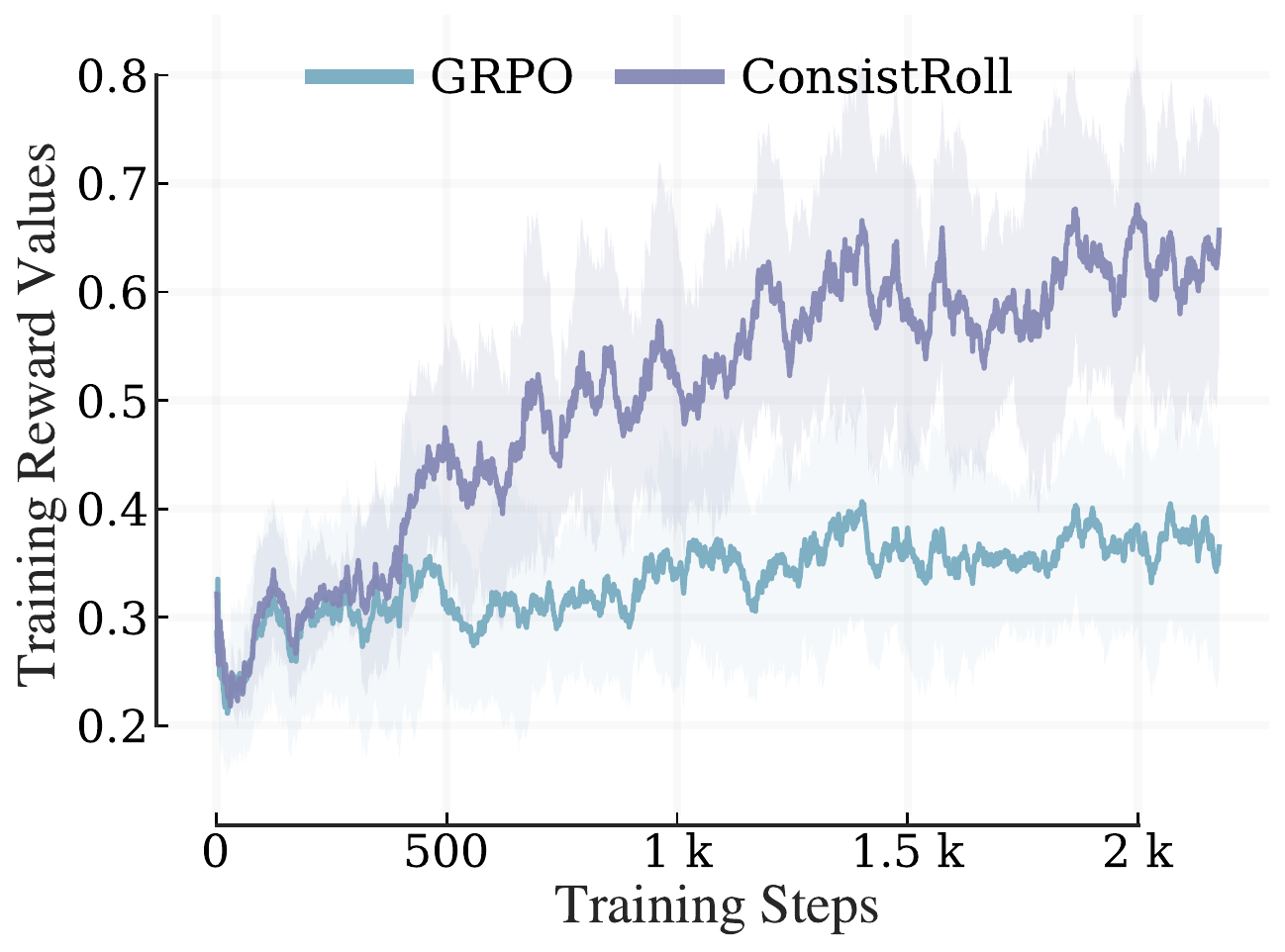}
        \vspace{-1.5em}\caption{Qwen2.5-VL-3B}
        \label{fig:exp_a}
    \end{subfigure}
    \hfill 
    \begin{subfigure}[b]{0.475\textwidth}
        \centering
        \includegraphics[width=\textwidth]{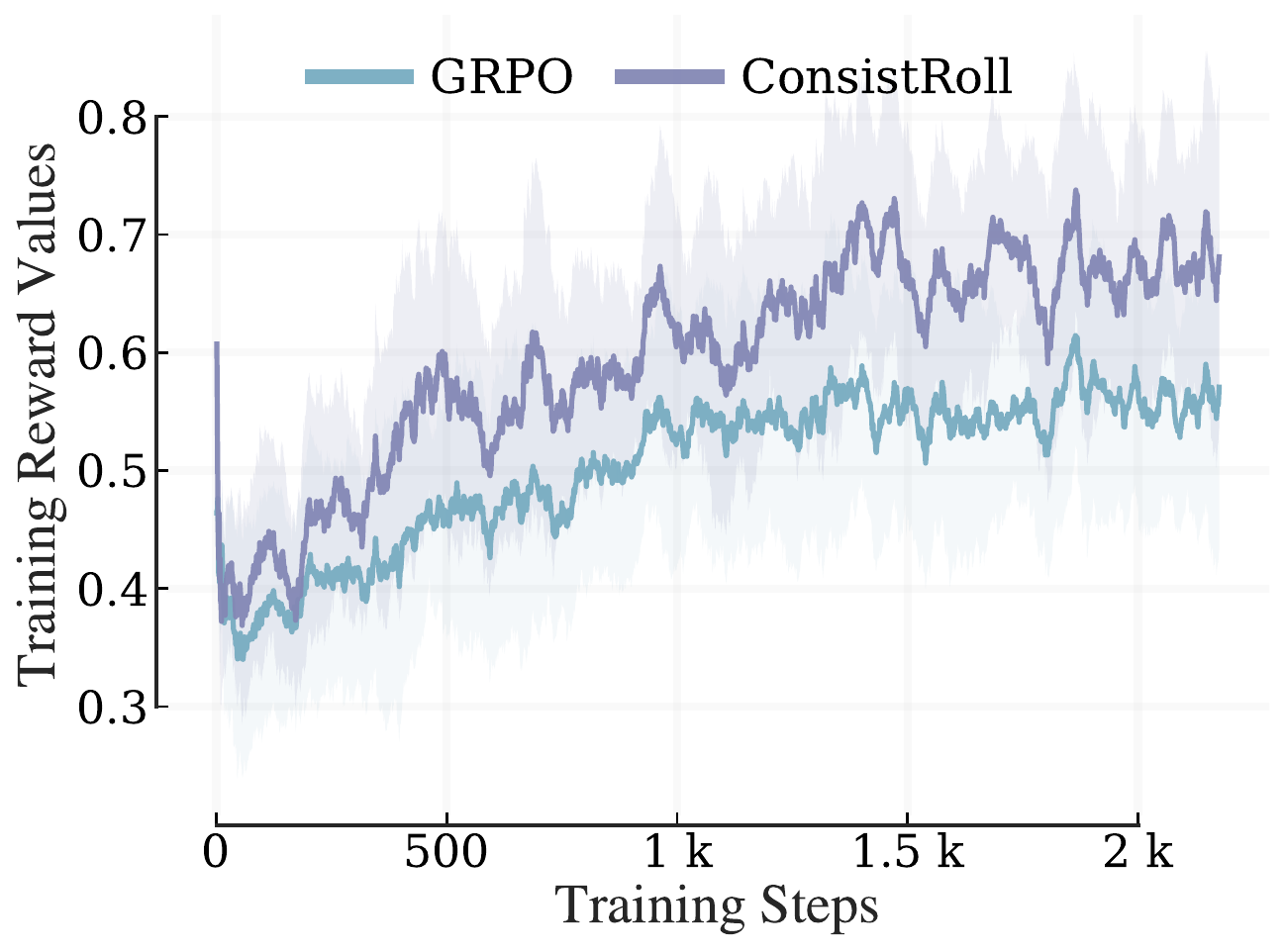}
        \vspace{-1.5em}\caption{Qwen2.5-VL-7B}
        \label{fig:exp_b}
    \end{subfigure} \vspace{-0.5em}
    \caption{Training reward curves of GRPO and our proposed ConsistRoll. 
ConsistRoll achieves consistently higher rewards and more stable optimization across training steps, indicating that view-coupled consistency provides a stronger and more reliable learning signal than pointwise GRPO.}
    \label{fig:reward_curve} \vspace{-1em}
\end{figure}
\textbf{Evaluation.}
We evaluate all models using the LMMs-Eval kit \cite{zhang2025lmms} under a unified evaluation protocol, ensuring that prompts, decoding settings, answer extraction, and metric computation are kept consistent across methods.
We consider 14 benchmark families spanning mathematical reasoning, logical reasoning, general multimodal understanding, and hallucination-sensitive evaluation.
Table~\ref{tab1:main} reports MathVerse, MathVision, MathVista, LogicVista, MMLU-Pro Math, JMMMU(JM3U)-Math, MMStar, DynaMath, and JMMMU-Pro.
Table~\ref{tab2:main} reports GQA, MMBench, MMStar, LLaVA-in-the-Wild, and HallusionBench.
For HallusionBench, we report answer-level accuracy, figure-level accuracy, question-pair accuracy, and their average.
All results are reported as percentage scores.
 {~\footnote{\url{https://github.com/obananas/ConsistRoll}}}

\subsection{Main Results}

\textbf{ConsistRoll improves mathematical and logical reasoning.}
Table~\ref{tab1:main} shows that ConsistRoll consistently improves GRPO-style training on math and logic benchmarks. On Qwen2.5-VL-7B, adding ConsistRoll to GRPO improves the Math Avg. from 50.6 to 52.8. When applied to Hint-GRPO, the gain is larger, increasing the Math Avg. from 50.6 to 53.7 (+3.1). The gains are broad across reasoning-heavy benchmarks: with GRPO on the 7B model, ConsistRoll improves LogicVista from 42.0 to 45.8, MMStar-Logical from 60.8 to 63.4, and MMStar-Math from 65.5 to 67.8. The same trend holds on Qwen2.5-VL-3B, although the margins are smaller under vanilla GRPO. 

\textbf{ConsistRoll transfers to general multimodal understanding.}
Table~\ref{tab2:main} further shows that the benefits are not limited to geometry-style training tasks. On Qwen2.5-VL-7B, ConsistRoll improves the General Avg. of GRPO from 67.8 to 72.3, a substantial +4.5 point gain. The improvement is especially clear on LLaVA-in-the-Wild, where ConsistRoll improves the overall score from 68.3 to 76.1, complex reasoning from 72.7 to 82.4, conversation from 78.8 to 85.6. ConsistRoll also improves Hint-GRPO on the 7B model, increasing the General Avg. from 69.1 to 72.4 (+3.3).

\textbf{ConsistRoll improves hallucination-sensitive evaluation.} As shown in Table~\ref{tab2:main}, 
ConsistRoll also improves HallusionBench performance. On Qwen2.5-VL-7B, ConsistRoll improves the Hallucination Avg. of GRPO from 46.3 to 47.3. When applied to Hint-GRPO, the improvement is more pronounced, increasing the Hallucination Avg. from 41.6 to 47.1 (+5.5). This gain is accompanied by clear improvements in answer-level, figure-level, and question-pair accuracy. On Qwen2.5-VL-3B, ConsistRoll also improves the Hallucination Avg. for both GRPO and Hint-GRPO, from 40.8 to 41.3 and from 41.4 to 41.8, respectively, although some individual submetrics fluctuate.
These results support the motivation of ConsistRoll. By rewarding only completions that are both correct and consistent across answer-preserving views, ConsistRoll strengthens visually grounded reasoning.

\begin{table}[t]
    \centering \vspace{-0.5em}
    \setlength{\tabcolsep}{3.2pt}
    \footnotesize
    \caption{Ablation on the consistency weight $\lambda$ in $r_k=r_k^a+\lambda r_k^c$ on Qwen2.5-VL-7B. $\lambda=0$ keeps paired-view rollouts but removes the consistency reward. $\Delta$ denotes improvement against GRPO.}
    \vspace{0.25em}
    \label{tab:lambda-ablation}
    \resizebox{\linewidth}{!}{
    \begin{tabular}{l*{9}{>{\centering\arraybackslash}p{4em}}}
        \toprule[1.25pt]
        \multirow{2}{*}{\textbf{Settings}} & \multirow{2}{*}{\makecell[c]{\textbf{Math}\\ \textbf{Average}}} & \multirow{2}{*}{\makecell[c]{\textbf{Logic-}\\ \textbf{Vista}}} & \multirow{2}{*}{\makecell[c]{\textbf{General}\\ \textbf{Average}}} & \multicolumn{3}{c}{\textbf{HallusionBench}} & \multirow{2}{*}{\makecell[l]{\textbf{Hallu.}\\ \textbf{Average}}} &\multirow{2}{*}{\makecell[l]{\textbf{Overall}}} & \multirow{2}{*}{\makecell[l]{\textbf{$\Delta$}}} \\
        \cmidrule(lr){5-7}
        & & & & \textbf{aAcc} & \textbf{fAcc} & \textbf{qAcc} & & & \\
        \midrule
        GRPO \cite{shao2024deepseekmath} & 50.57 & 41.96 & 67.82 & 63.41 & 38.44 & 37.14 & 46.33 & 54.91 & -- \\
        \midrule
        w/ Consist-only & 50.29 & 43.75 & 68.11 & 63.93 & 39.60 & 38.24 & 47.26 & 55.22 & +0.31 \\
        \midrule
        $\lambda=0.0$ & 50.65 & 42.41 & 68.48 & 64.04 & 38.73 & 38.02 & 46.93 & 55.35 & +0.44 \\
        $\lambda=0.1$ & 51.54 & 44.64 & 69.87 & 64.46 & 39.31 & \textbf{39.12} & \textbf{47.63} & 56.34 & +1.44 \\
        $\lambda=0.3$ & 51.47 & 44.20 & 71.53 & 63.20 & 37.86 & 37.36 & 46.14 & 56.38 & +1.47 \\
        \rowcolor{gray!10}
        $\lambda=0.5\ (\mathrm{defualt})$ & \textbf{52.79} & 45.76 & \textbf{72.32} & 63.93 & \textbf{39.60} & 38.24 & 47.26 & \textbf{57.45} & \textbf{+2.55} \\
        $\lambda=0.7$ & 51.48 & 45.54 & 70.19 & 63.51 & 38.15 & 38.24 & 46.63 & 56.10 & +1.19 \\
        $\lambda=0.8$ & 50.98 & 42.86 & 69.29 & \textbf{64.77} & 39.02 & 38.90 & 47.56 & 55.95 & +1.04 \\
        $\lambda=1.0$ & 51.34 & \textbf{46.21} & 69.71 & 64.04 & 39.02 & 37.58 & 46.88 & 55.98 & +1.07 \\
        \bottomrule[1.25pt]
    \end{tabular}
    }
    \vspace{-0.8em}
\end{table}

\begin{table}[t]
    \centering \vspace{-0.5em}
    \setlength{\tabcolsep}{3.4pt}
    \footnotesize
    \caption{
Ablation on semantically-invariant transformations. 
We compare ConsistRoll using the default rotation-based and noise perturbation. 
Rotation provides more stable gains across benchmarks
}
    \vspace{0.5em}\label{tab:transformation_ablation}
    \resizebox{\linewidth}{!}{
    \begin{tabular}{l *{12}{>{\centering\arraybackslash}p{2.8em}}}
        \toprule[1.25pt]
        \multirow{2}{*}{\textbf{Methods}}  
        & \multirow{2}{*}{\makecell[l]{\textbf{Math-}\\ \textbf{Verse}}} 
        & \multirow{2}{*}{\makecell[l]{\textbf{Math-}\\ \textbf{Vision}}} 
        & \multirow{2}{*}{\makecell[l]{\textbf{Math-}\\ \textbf{Vista}}} 
        & \multirow{2}{*}{\makecell[l]{\textbf{Logic-}\\ \textbf{Vista}}} 
        & \multirow{2}{*}{\makecell[c]{\textbf{MMLU}\\ \textbf{Math}}} 
        & \multirow{2}{*}{\makecell[c]{\textbf{JM$^{3}$U}\\ \textbf{-Math}}} 
        & \multicolumn{2}{c}{\textbf{MMStar}} 
        & \multirow{2}{*}{\makecell[c]{\textbf{Dyna-}\\ \textbf{Math}}} 
        & \multirow{2}{*}{\makecell[c]{\textbf{Math}\\ \textbf{Avg.}}} 
        & \multicolumn{2}{c}{\textbf{JMMMU Pro}} \\
        \cmidrule[0.5pt](lr){8-9} 
        \cmidrule[0.5pt](lr){12-13} 
        & & & & & & & \textbf{Logical} & \textbf{Math} & & & \textbf{Std.} & \textbf{Vision}\\
        \midrule
        + GRPO \cite{shao2024deepseekmath} 
        & 53.6 & 27.6 & 68.0 & 42.0 & 52.5 & 33.3 & 60.8 & 65.5 & 51.9 
        & {50.6{\tiny$\;\fg{0.3}$}} 
        & 46.3 & 46.4\\
        \cc + ConsistRoll$_{\text{w/ rotate}}$   
        & \cc 54.2 & \cc 28.9 & \cc 69.2 & \cc 45.8 & \cc 53.2 & \cc 40.0 
        & \cc 63.4 & \cc 67.8 & \cc 52.6 
        & \cc {52.8{\tiny$\;\fg{2.5}$}} 
        & \cc 46.7 & \cc 46.4\\
        \cc + ConsistRoll$_{\text{w/ noise}}$
        & \cc 53.1 & \cc 33.2 & \cc 69.1 & \cc 45.5 & \cc 51.2 & \cc 36.7
        & \cc 62.7 & \cc 64.9 & \cc 51.5
        & \cc {52.0{\tiny$\;\fg{1.7}$}}
        & \cc 45.5 & \cc 46.0\\ \midrule
        \multirow{2}{*}{\textbf{Methods}}  
        & \multirow{2}{*}{\makecell[c]{\textbf{GQA}}} 
        & \multirow{2}{*}{\makecell[c]{\textbf{MMB}}} 
        & \textbf{MMStar} 
        & \multicolumn{4}{c}{\textbf{LLaVA in the wild}} 
        & \multirow{2}{*}{\textbf{\makecell{General\\Avg.}}}
        & \multicolumn{3}{c}{\textbf{HallusionBench}} 
        & \multirow{2}{*}{\textbf{\makecell{Hallu. \\ Avg.}}} \\
        \cmidrule[0.5pt](lr){4-4}  
        \cmidrule[0.5pt](lr){5-8} 
        \cmidrule[0.5pt](lr){10-12}
        & & & \textbf{Avg.} 
        & \textbf{all} & \textbf{complex} & \textbf{conv} & \textbf{detail} 
        & 
        & \textbf{aAcc} & \textbf{fAcc} & \textbf{qAcc} 
        & \\
        \midrule
+ GRPO \cite{shao2024deepseekmath} 
        & 61.0 & 83.2 & 61.5 & 68.3 & 72.7 & 78.8 & 49.3 
        & {67.8{\tiny$\;\fr{4.0}$}} 
        & 63.4 & 38.4 & 37.1 
        & {46.3{\tiny$\;\fg{1.3}$}} \\
        \cc + ConsistRoll$_{\text{w/ rotate}}$ 
        & \cc 60.8 & \cc 83.2 & \cc 63.2 & \cc 76.1 & \cc 82.4 & \cc 85.6 & \cc 54.9 
        & \cc {72.3{\tiny$\;\fg{0.5}$}} 
        & \cc 63.9 & \cc 39.6 & \cc 38.2 
        & \cc {47.3{\tiny$\;\fg{2.3}$}} \\ 
        \cc + ConsistRoll$_{\text{w/ noise}}$
        & \cc 60.9 & \cc 83.2 & \cc 60.8 & \cc 73.4 & \cc 78.9 & \cc 83.7 & \cc 52.3
        & \cc {70.4{\tiny$\;\fr{1.3}$}}
        & \cc 62.9 & \cc 36.7 & \cc 36.1
        & \cc {45.2{\tiny$\;\fg{0.2}$}} \\
        \bottomrule[1.25pt]
    \end{tabular}
    }\vspace{-0.5em}
\end{table}
\subsection{Ablation and Optimization Analysis}

\textbf{Effect of the consistency weight.}
Table~\ref{tab:lambda-ablation} studies the consistency weight $\lambda$ in the reward $r_k = r_k^a + \lambda r_k^c$. The default setting $\lambda=0.5$ gives the best overall trade-off, achieving the best performance, and overall Avg. of 57.45, with a +2.55 point gain over GRPO, and also improves LogicVista from 41.96 to 45.76, showing that the selected weight benefits the logical visual reasoning.

The ablation separates the effect of paired-view exposure from the consistency reward. When $\lambda=0$, the model still uses paired original and transformed rollouts, but removes the consistency reward. This setting improves only slightly over GRPO, indicating that transformed-view exposure alone is helpful but insufficient. Moderate positive values of $\lambda$ perform best: small values underuse the cross-view signal, while large values may overemphasize agreement and weaken the pointwise correctness objective. Although $\lambda=0.1$ and $\lambda=0.8$ obtain slightly higher HallusionBench averages, they underperform the default setting $\lambda=0.5$ on the main reasoning and general-purpose averages.

\textbf{Effect of the augmentation.}
We further study whether the choice of semantically-invariant transformation affects ConsistRoll. 
Table~\ref{tab:transformation_ablation} compares the default rotation-based transformation with random noise perturbation on the same GRPO backbone. 
Both variants improve mathematical reasoning over GRPO, confirming that coupling semantically related views is beneficial beyond a specific augmentation type. 
However, the rotation-based variant achieves stronger and more balanced gains, improving Math Avg. from $50.6$ to $52.8$, General Avg. from $67.8$ to $72.3$. 
In contrast, noise augmentation improves Math Avg. to $52.0$ and remains competitive on several visual reasoning subsets, but its gains are less stable on general and hallucination-sensitive benchmarks. 
We think ConsistRoll benefits more from structured transformations that preserve high-level semantics while inducing meaningful view changes, and low-level noise provides less stable consistency signals.

\begin{table}[t!]
\vspace{-0.5em}
\fontsize{8pt}{\baselineskip}\selectfont
\linespread{0.95}\selectfont
\begin{minipage}{0.46\columnwidth}
\begin{mybody}
    \begin{minipage}[s][2.0cm]{0.41\columnwidth}
    \includegraphics[height=1.7cm]{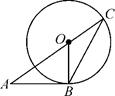}
    \end{minipage}
    \begin{minipage}[s][2.0cm]{0.56\columnwidth}
    \textbf{Question:} As shown in the figure, if angle A = 36.0, then angle C is equal to () \\
    \textbf{Choices:} \\
    A:36° \;   B:54° \;
    C:60°  \;  \textcolor{darkgreen}{\textbf{D:27°}} 
    \end{minipage}
    \vspace{0.5em}\\\textbf{Key reasoning of GRPO:}\\
   1. Identify the given angle: angle A = 36°. \textcolor{darkred}{2. Note that angle A and angle C are circumference angles subtended by the same arc. 3. By the same-arc theorem, equal arcs subtend equal circumference angles. Therefore, angle C = angle A = 36°. }
    \vspace{0.6em}\\ \textbf{Key reasoning of ConsistRoll:}\\
    1. Identify the tangent property: line AB is tangent at B, so radius OB is perpendicular to AB and angle OBA = 90°. 2. Find angle AOB: angle AOB = 180° - 90° - 36° = 54°. 3. Since OB and OC are both radii, triangle OBC is isosceles, so angle OCB equals angle OBC. 4. By the exterior angle property, angle AOB equals twice angle OCB. Therefore, angle OCB = 27°.
\end{mybody}
\vspace{-1mm}
\end{minipage}
 \hfill
\begin{minipage}{0.53\columnwidth}
\begin{mybody}
    \begin{minipage}[s][2.5cm]{0.35\columnwidth}
    \includegraphics[height=2.1cm]{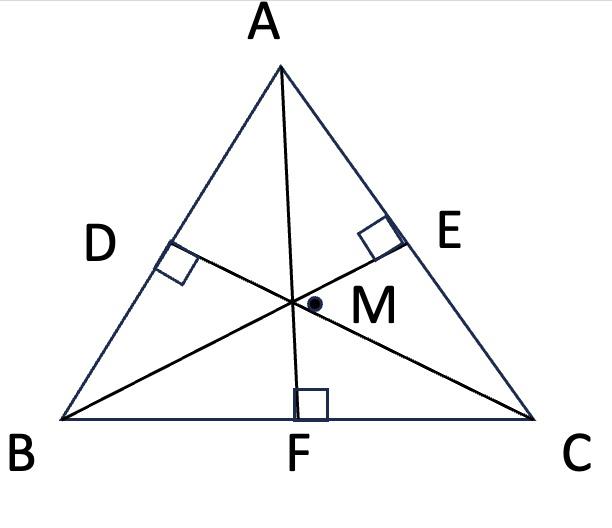}
    \end{minipage}
    \begin{minipage}[s][2.5cm]{0.62\columnwidth}
    \textbf{Question:} As shown in the figure, if BE $\perp$ AC, CD $\perp$ AB, AF $\perp$ BC, then is M the incenter of triangle ABC? \\ Please answer yes or no. \vspace{0.5em}\\ \textbf{Prediction of GRPO:} \textcolor{darkred}{Yes.} \\\textbf{Prediction of ConsistRoll:} No.
    \end{minipage}\vspace{-1.em}
\end{mybody}
\vspace{-3mm}
\begin{mybody}
    \begin{minipage}[s][1.9cm]{1\columnwidth}
    \includegraphics[height=1.3cm]{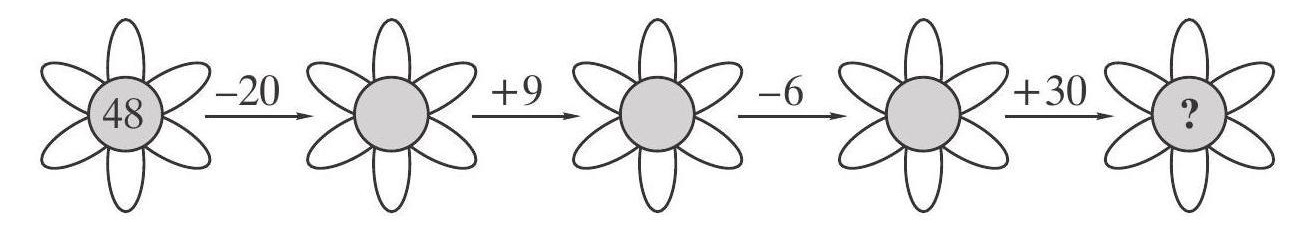}
    \end{minipage} \vspace{-2em}\\
    \begin{minipage}[s][1.9cm]{1\columnwidth}
    \textbf{Question:} Which number do you have to write in the last daisy? 
    \end{minipage}\vspace{-5.5em}
    \\\textbf{Key reasoning of GRPO:}\\
    1. Start at 48. 2. Deduct 20 from 48: (48 - 20 = 28). \textcolor{darkred}{3. Adjust the weight and add 9: (28 + 9 = 35). 4. Subtract 6 from the daisy and add 30 to get 61: (35 - 6 + 30 = 61).} The number is (\boxed{61})
    \\\textbf{Key reasoning of ConsistRoll:}\\
    1. Start with the number 48. 2. Subtract 20 from 48: (48 - 20 = 28). 3. Add 9 to 28: (28 + 9 = 37). 4. Subtract 6 from 37: (37 - 6 = 31). 5. Add 30 to 31: (31 + 30 = 61). The number is (\boxed{61})
\end{mybody}
\vspace{-1mm}
\end{minipage}
\vspace{0.5em}
\caption{Qualitative case studies comparing GRPO and our proposed ConsistRoll. }
\vspace{-2.5em}
\label{fig:cases}
\end{table}

\textbf{Beyond data augmentation.}
The $\lambda=0$ setting corresponds to GRPO+DA: the model sees the same transformed views, but receives no consistency bonus. This control only slightly improves over GRPO, increasing the Overall Avg. from 54.91 to 55.35 (+0.44). By contrast, default ConsistRoll reaches 57.45 (+2.55), with clear gains on math and general average, which show that naive DA is not sufficient. The key factor is the correctness-aware cross-view reward, which converts answer-preserving transformations from passive augmentation into an explicit credit-assignment signal.

\textbf{Training dynamics.}
Figure~\ref{fig:reward_curve} compares the training reward curves of GRPO and ConsistRoll. ConsistRoll reaches higher rewards and exhibits smoother optimization on both Qwen2.5-VL-3B and Qwen2.5-VL-7B, suggesting that the view-coupled group provides a stronger credit-assignment signal than pointwise GRPO. Since ConsistRoll includes an additional consistency reward term, we treat the reward curves as an optimization diagnostic rather than standalone evidence of final performance; the main evidence comes from downstream benchmark improvements in Tables~\ref{tab1:main} and~\ref{tab2:main}.

\vspace{-0.5em}
\subsection{Qualitative Analysis}
\label{sec:qualitative}

\begin{wraptable}{r}{0.45\textwidth}
\vspace{-2em}
\centering
\setlength{\tabcolsep}{4.5pt}
\small
\caption{Cross-view stability on GeoQA.}
\vspace{-0.5em}
\label{tab:cross-view-stability}
\resizebox{\linewidth}{!}{
\begin{tabular}{lcccc}
\toprule
\textbf{Model} 
& \makecell[c]{$D_{\mathrm{rot}}$\\\textbf{Deg.}} 
& \makecell[c]{$I_{\mathrm{rot}}$\\\textbf{Imp.}} 
& \makecell[c]{$F_{\mathrm{rot}}$\\\textbf{Flips}} 
& \makecell[c]{$C_{\mathrm{rot}}$\\\textbf{(\%)}} \\
\midrule
Qwen2.5-VL-3B & 78 & 57 & 135 & 82.10 \\
GRPO           & 65 & 55 & 120 & 84.08 \\
ConsistRoll    & 43 & 51 &  94 & 87.53 \\
\midrule
Qwen3-VL-4B   & 58 & 46 & 104 & 86.42 \\
GRPO           & 48 & 43 &  91 & 88.15 \\
ConsistRoll    & 22 & 35 &  57 & 91.85 \\
\bottomrule
\end{tabular}}
\vspace{-1.em}
\end{wraptable}

\textbf{Cross-view stability.}
We evaluate prediction stability on $754$ original-rotation pairs from GeoQA \cite{chen2021geoqa}, using degradation $D_{\mathrm{rot}}$, improvement $I_{\mathrm{rot}}$, total flips $F_{\mathrm{rot}}$, and rotation consistency $C_{\mathrm{rot}}$; detailed definitions are provided in Appendix~\ref{app:cross-view-stability}. 
As shown in Table~\ref{tab:cross-view-stability}, ConsistRoll improves $C_{\mathrm{rot}}$ from $82.10\%$ for Qwen2.5-VL-3B and $84.08\%$ for GRPO to $87.53\%$, while reducing total flips from $135$ and $120$ to $94$. Rotation-induced degradation drops indicate that view-coupled credit assignment better preserves correct predictions under the view transformations.

\textbf{Case study.} Figure~\ref{fig:cases} shows examples where our proposed ConsistRoll better follows visual evidence.




\vspace{-0.5em}\section{Conclusion}
We presented ConsistRoll, a simple RLVR method that turns cross-view consistency into an online credit-assignment signal. By grouping answer-preserving views in GRPO and rewarding paired completions only when they are both correct and consistent, ConsistRoll improves Qwen2.5-VL-3B/7B across reasoning, general, and hallucination-sensitive benchmarks. Ablations show that correctness-aware consistency, rather than augmentation alone, is the main source of improvement.



\clearpage
{\small
\bibliographystyle{plain}
\bibliography{reference}

\begin{thebibliography}{10}

\bibitem{bai2025qwen3}
Shuai Bai, Yuxuan Cai, Ruizhe Chen, Keqin Chen, Xionghui Chen, Zesen Cheng, Lianghao Deng, Wei Ding, Chang Gao, Chunjiang Ge, et~al.
\newblock Qwen3-vl technical report.
\newblock {\em arXiv preprint arXiv:2511.21631}, 2025.

\bibitem{bronstein2021geometric}
Michael~M. Bronstein, Joan Bruna, Taco Cohen, and Petar Veli{\v{c}}kovi{\'c}.
\newblock Geometric deep learning: Grids, groups, graphs, geodesics, and gauges.
\newblock {\em arXiv preprint arXiv:2104.13478}, 2021.

\bibitem{chen2021geoqa}
Jiaqi Chen, Jianheng Tang, Jinghui Qin, Xiaodan Liang, Lingbo Liu, Eric~P. Xu, and Liang Lin.
\newblock {GeoQA}: A geometric question answering benchmark towards multimodal numerical reasoning.
\newblock In {\em Findings of the Association for Computational Linguistics (ACL-IJCNLP)}, pages 513--523, 2021.

\bibitem{cohen2016group}
Taco Cohen and Max Welling.
\newblock Group equivariant convolutional networks.
\newblock In {\em International conference on machine learning}, pages 2990--2999. PMLR, 2016.

\bibitem{cubuk2019autoaugment}
Ekin~D. Cubuk, Barret Zoph, Dandelion Man{\'e}, Vijay Vasudevan, and Quoc~V. Le.
\newblock {AutoAugment}: Learning augmentation strategies from data.
\newblock In {\em Proceedings of the IEEE/CVF Conference on Computer Vision and Pattern Recognition (CVPR)}, pages 113--123, 2019.

\bibitem{cubuk2020randaugment}
Ekin~D. Cubuk, Barret Zoph, Jonathon Shlens, and Quoc~V. Le.
\newblock {RandAugment}: Practical automated data augmentation with a reduced search space.
\newblock In {\em Proceedings of the IEEE/CVF Conference on Computer Vision and Pattern Recognition Workshops (CVPRW)}, pages 702--703, 2020.

\bibitem{fu2025chip}
Jinlan Fu, Shenzhen Huangfu, Hao Fei, Xiaoyu Shen, Bryan Hooi, Xipeng Qiu, and See-Kiong Ng.
\newblock {CHiP}: Cross-modal hierarchical direct preference optimization for multimodal {LLMs}.
\newblock In {\em International Conference on Learning Representations}, 2025.

\bibitem{geirhos2020shortcut}
Robert Geirhos, J{\"o}rn-Henrik Jacobsen, Claudio Michaelis, Richard Zemel, Wieland Brendel, Matthias Bethge, and Felix~A Wichmann.
\newblock Shortcut learning in deep neural networks.
\newblock {\em Nature Machine Intelligence}, 2(11):665--673, 2020.

\bibitem{guan2024hallusionbench}
Tianrui Guan, Fuxiao Liu, Xiyang Wu, Ruiqi Xian, Zongxia Li, Xiaoyu Liu, Xijun Wang, Lichang Chen, Furong Huang, Yaser Yacoob, Dinesh Manocha, and Tianyi Zhou.
\newblock {HallusionBench}: An advanced diagnostic suite for entangled language hallucination and visual illusion in large vision-language models.
\newblock In {\em Proceedings of the IEEE/CVF Conference on Computer Vision and Pattern Recognition}, 2024.

\bibitem{guo2025deepseekr1}
Daya Guo, Dejian Yang, Haowei Zhang, Junxiao Song, Ruoyu Zhang, Runxin Xu, Qihao Zhu, Shirong Ma, Peiyi Wang, Xiao Bi, et~al.
\newblock {DeepSeek-R1}: Incentivizing reasoning capability in {LLMs} via reinforcement learning.
\newblock {\em arXiv preprint arXiv:2501.12948}, 2025.

\bibitem{hendrycks2020augmix}
Dan Hendrycks, Norman Mu, Ekin~Dogus Cubuk, Barret Zoph, Justin Gilmer, and Balaji Lakshminarayanan.
\newblock Augmix: A simple data processing method to improve robustness and uncertainty.
\newblock In {\em International Conference on Learning Representations}, 2020.

\bibitem{hermann2020shapes}
Katherine Hermann, Ting Chen, and Simon Kornblith.
\newblock The origins and prevalence of texture bias in convolutional neural networks.
\newblock {\em Advances in neural information processing systems}, 33:19000--19015, 2020.

\bibitem{huang2025boosting}
Qihan Huang, Weilong Dai, Jinlong Liu, Wanggui He, Hao Jiang, Mingli Song, Jingyuan Chen, Chang Yao, and Jie Song.
\newblock Boosting mllm reasoning with text-debiased hint-grpo.
\newblock In {\em Proceedings of the IEEE/CVF International Conference on Computer Vision}, pages 4848--4857, 2025.

\bibitem{huang2026vppo}
Siyuan Huang, Xiaoye Qu, Yafu Li, Yun Luo, Zefeng He, Daizong Liu, and Yu~Cheng.
\newblock Spotlight on token perception for multimodal reinforcement learning.
\newblock In {\em International Conference on Learning Representations}, 2026.

\bibitem{huang2025visionr1}
Wenxuan Huang, Bohan Jia, Zijie Zhai, Shaosheng Cao, Zheyu Ye, Fei Zhao, Yao Hu, and Shaohui Lin.
\newblock {Vision-R1}: Incentivizing reasoning capability in multimodal large language models.
\newblock {\em arXiv preprint arXiv:2503.06749}, 2025.

\bibitem{hudson2019gqa}
Drew~A Hudson and Christopher~D Manning.
\newblock Gqa: A new dataset for real-world visual reasoning and compositional question answering.
\newblock In {\em Proceedings of the IEEE/CVF conference on computer vision and pattern recognition}, pages 6700--6709, 2019.

\bibitem{team2025kimi}
{Kimi Team}.
\newblock Kimi k1.5: Scaling reinforcement learning with {LLMs}.
\newblock {\em arXiv preprint arXiv:2501.12599}, 2025.

\bibitem{lecun1989}
Yann LeCun.
\newblock Generalization and network design strategies.
\newblock In R.~Pfeifer, Z.~Schreter, F.~Fogelman-Souli{\'e}, and L.~Steels, editors, {\em Connectionism in Perspective}. Elsevier, 1989.

\bibitem{li2024llavaonevision}
Bo~Li, Yuanhan Zhang, Dong Guo, Renrui Zhang, Feng Li, Hao Zhang, Kaichen Zhang, Peiyuan Zhang, Yanwei Li, Ziwei Liu, et~al.
\newblock Llava-onevision: Easy visual task transfer.
\newblock {\em arXiv preprint arXiv:2408.03326}, 2024.

\bibitem{li2023pope}
Yifan Li, Yifan Du, Kun Zhou, Jinpeng Wang, Wayne~Xin Zhao, and Ji-Rong Wen.
\newblock Evaluating object hallucination in large vision-language models.
\newblock {\em arXiv preprint arXiv:2305.10355}, 2023.

\bibitem{li2025eagle}
Zhiqi Li, Guo Chen, Shilong Liu, Shihao Wang, Vibashan VS, Yishen Ji, Shiyi Lan, Hao Zhang, Yilin Zhao, Subhashree Radhakrishnan, et~al.
\newblock Eagle 2: Building post-training data strategies from scratch for frontier vision-language models.
\newblock {\em arXiv preprint arXiv:2501.14818}, 2025.

\bibitem{liu2025noisyrollout}
Xiangyan Liu, Jinjie Ni, Zijian Wu, Chao Du, Longxu Dou, Haonan Wang, Tianyu Pang, and Michael~Qizhe Shieh.
\newblock Noisyrollout: Reinforcing visual reasoning with data augmentation.
\newblock {\em arXiv preprint arXiv:2504.13055}, 2025.

\bibitem{liu2025mmbench}
Yuan Liu, Haodong Duan, Yuanhan Zhang, Bo~Li, Songyang Zhang, Wangbo Zhao, Yike Yuan, Jiaqi Wang, Conghui He, Ziwei Liu, et~al.
\newblock Mmbench: Is your multi-modal model an all-around player?
\newblock In {\em European Conference on Computer Vision}, pages 216--233. Springer, 2025.

\bibitem{liu2025understanding}
Zichen Liu, Changyu Chen, Wenjun Li, Penghui Qi, Tianyu Pang, Chao Du, Wee~Sun Lee, and Min Lin.
\newblock Understanding r1-zero-like training: A critical perspective.
\newblock {\em arXiv preprint arXiv:2503.20783}, 2025.

\bibitem{liu2025visualreasoning}
Ziyu Liu, Zeyi Sun, Yuhang Zang, Xiaoyi Dong, Yuhang Cao, Haodong Duan, Dahua Lin, and Jiaqi Wang.
\newblock Visual-{RFT}: Visual reinforcement fine-tuning.
\newblock {\em arXiv preprint arXiv:2503.01785}, 2025.

\bibitem{lu2023mathvista}
Pan Lu, Hritik Bansal, Tony Xia, Jiacheng Liu, Chunyuan Li, Hannaneh Hajishirzi, Hao Cheng, Kai-Wei Chang, Michel Galley, and Jianfeng Gao.
\newblock {MathVista}: Evaluating mathematical reasoning of foundation models in visual contexts.
\newblock {\em arXiv preprint arXiv:2310.02255}, 2023.

\bibitem{meng2025mm}
Fanqing Meng, Lingxiao Du, Zongkai Liu, Zhixiang Zhou, Quanfeng Lu, Daocheng Fu, Botian Shi, Wenhai Wang, Junjun He, Kaipeng Zhang, Ping Luo, Yu~Qiao, Qiaosheng Zhang, and Wenqi Shao.
\newblock Mm-eureka: Exploring visual aha moment with rule-based large-scale reinforcement learning.
\newblock {\em arXiv preprint arXiv:2503.07365}, 2025.

\bibitem{miyato2018vat}
Takeru Miyato, Shin-ichi Maeda, Masanori Koyama, and Shin Ishii.
\newblock Virtual adversarial training: A regularization method for supervised and semi-supervised learning.
\newblock volume~41, pages 1979--1993, 2018.

\bibitem{ouyang2022instructgpt}
Long Ouyang, Jeffrey Wu, Xu~Jiang, Diogo Almeida, Carroll Wainwright, Pamela Mishkin, Chong Zhang, Sandhini Agarwal, Katarina Slama, Alex Ray, et~al.
\newblock Training language models to follow instructions with human feedback.
\newblock {\em Advances in neural information processing systems}, 35:27730--27744, 2022.

\bibitem{rafailov2023dpo}
Rafael Rafailov, Archit Sharma, Eric Mitchell, Stefano Ermon, Christopher~D. Manning, and Chelsea Finn.
\newblock Direct preference optimization: Your language model is secretly a reward model.
\newblock In {\em Advances in Neural Information Processing Systems}, 2023.

\bibitem{sagawa2020group}
Shiori Sagawa, Pang~Wei Koh, Tatsunori~B. Hashimoto, and Percy Liang.
\newblock Distributionally robust neural networks for group shifts: On the importance of regularization for worst-case generalization.
\newblock In {\em International Conference on Learning Representations (ICLR)}, 2020.

\bibitem{schulman2017ppo}
John Schulman, Filip Wolski, Prafulla Dhariwal, Alec Radford, and Oleg Klimov.
\newblock Proximal policy optimization algorithms.
\newblock In {\em arXiv preprint arXiv:1707.06347}, 2017.

\bibitem{shao2024deepseekmath}
Zhihong Shao, Peiyi Wang, Qihao Zhu, Runxin Xu, Junxiao Song, Xiao Bi, Haowei Zhang, Mingchuan Zhang, YK~Li, Yang Wu, et~al.
\newblock Deepseekmath: Pushing the limits of mathematical reasoning in open language models.
\newblock {\em arXiv preprint arXiv:2402.03300}, 2024.

\bibitem{shen2025vlmr1}
Haozhan Shen, Peng Liu, Jingcheng Li, Chunxin Fang, Yibo Ma, Jiajia Liao, Qiaoli Shen, Zilun Zhang, Kangjia Zhao, Qianqian Zhang, Ruochen Xu, and Tiancheng Zhao.
\newblock {VLM-R1}: A stable and generalizable {R1}-style large vision-language model.
\newblock {\em arXiv preprint arXiv:2504.07615}, 2025.

\bibitem{sun2023llavarlhf}
Zhiqing Sun, Sheng Shen, Shengcao Cao, Haotian Liu, Chunyuan Li, Yikang Shen, Chuang Gan, Liang-Yan Gui, Yu-Xiong Wang, Yiming Yang, Kurt Keutzer, and Trevor Darrell.
\newblock Aligning large multimodal models with factually augmented {RLHF}.
\newblock In {\em Findings of the Association for Computational Linguistics: ACL}, 2024.

\bibitem{tan2025reasonrft}
Huajie Tan, Yuheng Ji, Xiaoshuai Hao, Minglan Lin, Pengwei Wang, Zhongyuan Wang, and Shanghang Zhang.
\newblock {Reason-RFT}: Reinforcement fine-tuning for visual reasoning.
\newblock {\em arXiv preprint arXiv:2503.20752}, 2025.

\bibitem{tarvainen2017mean}
Antti Tarvainen and Harri Valpola.
\newblock Mean teachers are better role models: Weight-averaged consistency targets improve semi-supervised deep learning results.
\newblock In {\em Advances in Neural Information Processing Systems (NeurIPS)}, volume~30, 2017.

\bibitem{wang2024mdpo}
Fei Wang, Wenxuan Zhou, James~Y. Huang, Nan Xu, Sheng Zhang, Hoifung Poon, and Muhao Chen.
\newblock {mDPO}: Conditional preference optimization for multimodal large language models.
\newblock In {\em Proceedings of the Conference on Empirical Methods in Natural Language Processing}, 2024.

\bibitem{wang2023amber}
Junyang Wang, Yuhang Wang, Guohai Xu, Jing Zhang, Yukai Gu, Haitao Jia, Jiaqi Wang, Haiyang Xu, Ming Yan, Ji~Zhang, and Jitao Sang.
\newblock {AMBER}: An {LLM}-free multi-dimensional benchmark for {MLLMs} hallucination evaluation.
\newblock {\em arXiv preprint arXiv:2311.07397}, 2023.

\bibitem{wang2026papo}
Zhenhailong Wang, Xuehang Guo, Sofia Stoica, Haiyang Xu, Hongru Wang, Hyeonjeong Ha, Xiusi Chen, Yangyi Chen, Ming Yan, Fei Huang, and Heng Ji.
\newblock Perception-aware policy optimization for multimodal reasoning.
\newblock In {\em International Conference on Learning Representations}, 2026.

\bibitem{weiler2019general}
Maurice Weiler and Gabriele Cesa.
\newblock General {E(2)}-equivariant steerable {CNNs}.
\newblock In {\em Advances in Neural Information Processing Systems (NeurIPS)}, volume~32, 2019.

\bibitem{xiao2025perceptionr1}
Tong Xiao, Xin Xu, Zhenya Huang, Hongyu Gao, Quan Liu, Qi~Liu, and Enhong Chen.
\newblock {Perception-R1}: Advancing multimodal reasoning capabilities of {MLLMs} via visual perception reward.
\newblock {\em arXiv preprint arXiv:2506.07218}, 2025.

\bibitem{xie2020uda}
Qizhe Xie, Zihang Dai, Eduard Hovy, Minh-Thang Luong, and Quoc~V. Le.
\newblock Unsupervised data augmentation for consistency training.
\newblock In {\em Advances in Neural Information Processing Systems (NeurIPS)}, volume~33, pages 6256--6268, 2020.

\bibitem{xie2024vdpo}
Yuxi Xie, Guanzhen Li, Xiao Xu, and Min-Yen Kan.
\newblock {V-DPO}: Mitigating hallucination in large vision language models via vision-guided direct preference optimization.
\newblock In {\em Findings of the Association for Computational Linguistics: EMNLP}, 2024.

\bibitem{xu2025llava-cot}
Guowei Xu, Peng Jin, Ziang Wu, Hao Li, Yibing Song, Lichao Sun, and Li~Yuan.
\newblock Llava-cot: Let vision language models reason step-by-step.
\newblock In {\em Proceedings of the IEEE/CVF International Conference on Computer Vision}, pages 2087--2098, 2025.

\bibitem{yao2025sharegrpo}
Huanjin Yao, Qixiang Yin, Jingyi Zhang, Min Yang, Yibo Wang, Wenhao Wu, Fei Su, Li~Shen, Minghui Qiu, Dacheng Tao, and Jiaxing Huang.
\newblock {R1-ShareVL}: Incentivizing reasoning capability of multimodal large language models via share-{GRPO}.
\newblock {\em arXiv preprint arXiv:2505.16673}, 2025.

\bibitem{yin2024survey}
Shukang Yin, Chaoyou Fu, Sirui Zhao, Ke~Li, Xing Sun, Tong Xu, and Enhong Chen.
\newblock A survey on multimodal large language models.
\newblock {\em National Science Review}, 11(12):nwae403, 2024.

\bibitem{yu2024rlhfv}
Tianyu Yu, Yuan Yao, Haoye Zhang, Taiwen He, Yifeng Han, Ganqu Cui, Jinyi Hu, Zhiyuan Liu, Hai-Tao Zheng, Maosong Sun, and Tat-Seng Chua.
\newblock {RLHF-V}: Towards trustworthy {MLLMs} via behavior alignment from fine-grained correctional human feedback.
\newblock In {\em Proceedings of the IEEE/CVF Conference on Computer Vision and Pattern Recognition}, 2024.

\bibitem{zaheer2017deepsets}
Manzil Zaheer, Satwik Kottur, Siamak Ravanbakhsh, Barnabas Poczos, Russ~R Salakhutdinov, and Alexander~J Smola.
\newblock Deep sets.
\newblock {\em Advances in neural information processing systems}, 30, 2017.

\bibitem{zhang2025r1vl}
Jingyi Zhang, Jiaxing Huang, Huanjin Yao, Shunyu Liu, Xikun Zhang, Shijian Lu, and Dacheng Tao.
\newblock {R1-VL}: Learning to reason with multimodal large language models via step-wise group relative policy optimization.
\newblock {\em arXiv preprint arXiv:2503.12937}, 2025.

\bibitem{zhang2025lmms}
Kaichen Zhang, Bo~Li, Peiyuan Zhang, Fanyi Pu, Joshua~Adrian Cahyono, Kairui Hu, Shuai Liu, Yuanhan Zhang, Jingkang Yang, Chunyuan Li, et~al.
\newblock Lmms-eval: Reality check on the evaluation of large multimodal models.
\newblock In {\em Findings of the Association for Computational Linguistics: NAACL 2025}, pages 881--916, 2025.

\bibitem{zhang2025edge}
Xingjian Zhang, Siwei Wen, Wenjun Wu, and Lei Huang.
\newblock Edge-grpo: Entropy-driven grpo with guided error correction for advantage diversity.
\newblock {\em arXiv preprint arXiv:2507.21848}, 2025.

\bibitem{zheng2024reefknot}
Kening Zheng, Junkai Chen, Yibo Yan, Xin Zou, and Xuming Hu.
\newblock Reefknot: A comprehensive benchmark for relation hallucination evaluation, analysis and mitigation in multimodal large language models.
\newblock {\em arXiv preprint arXiv:2408.09429}, 2024.

\bibitem{zhou2025mitigating}
Guanyu Zhou, Yibo Yan, Xin Zou, Kun Wang, Aiwei Liu, and Xuming Hu.
\newblock Mitigating modality prior-induced hallucinations in multimodal large language models via deciphering attention causality.
\newblock In {\em International Conference on Learning Representations}, volume 2025, pages 54415--54439, 2025.

\bibitem{ziegler2019rlhf}
Daniel~M. Ziegler, Nisan Stiennon, Jeffrey Wu, Tom~B. Brown, Alec Radford, Dario Amodei, Paul Christiano, and Geoffrey Irving.
\newblock Fine-tuning language models from human preferences.
\newblock {\em arXiv preprint arXiv:1909.08593}, 2019.

\bibitem{zou2026learning}
Xin Zou, Ruimeng Liu, Chang Tang, Zhenglai Li, Xinwang Liu, Kunlun He, and Wanqing Li.
\newblock Learning disentangled representations for generalized multi-view clustering.
\newblock {\em IEEE Transactions on Pattern Analysis and Machine Intelligence}, 2026.

\bibitem{zou2023dpnet}
Xin Zou, Chang Tang, Xiao Zheng, Zhenglai Li, Xiao He, Shan An, and Xinwang Liu.
\newblock Dpnet: Dynamic poly-attention network for trustworthy multi-modal classification.
\newblock In {\em Proceedings of the 31st ACM international conference on multimedia}, pages 3550--3559, 2023.

\bibitem{zou2024look}
Xin Zou, Yizhou Wang, Yibo Yan, Sirui Huang, Kening Zheng, Junkai Chen, Chang Tang, and Xuming Hu.
\newblock Look twice before you answer: Memory-space visual retracing for hallucination mitigation in multimodal large language models.
\newblock {\em arXiv preprint arXiv:2410.03577}, 2024.

\end{thebibliography}
}

\newpage
\appendix
\hypersetup{linkcolor=black}
\etocdepthtag.toc{mtappendix}
\etocsettagdepth{mtchapter}{none}
\etocsettagdepth{mtappendix}{subsubsection}
\tableofcontents
\clearpage

{\large\textbf{$\infty$ Technical Appendices and Supplements}}

This appendix provides supplementary details for ConsistRoll, including benchmark descriptions, implementation settings, extended ablations, cross-view stability metrics, and theoretical proofs.

\section{Detailed Experiment Settings}
\label{apx:setting}

\subsection{Training Setup}
\label{apx:training}
We use Qwen2.5-VL-3B and Qwen2.5-VL-7B as the backbone MLLMs. All RLVR methods are trained on the same 8,082 math-and-logic samples extracted from LLaVA-CoT~\citep{xu2025llava-cot}. The response format is fixed as \texttt{<think>...</think><answer>...</answer>} so that the verifier can parse the final answer consistently. We train for one epoch with AdamW and a learning rate of $5\times10^{-5}$. The total rollout number is fixed to $n=4$ for all methods. For ConsistRoll, invariant samples allocate $n/2$ rollouts to the original view and $n/2$ rollouts to the transformed view, so the total rollout budget is identical to GRPO. Unless otherwise specified, the consistency weight is set to $\lambda=0.5$ and the transformation is image rotation.

\subsection{Benchmarks and Metrics}
\label{apx:dataset}
We evaluate mathematical and logical reasoning with MathVerse, MathVision, MathVista~\citep{lu2023mathvista}, LogicVista, MMLU-Pro Math, JMMMU-Math, MMStar, DynaMath, and JMMMU-Pro. These benchmarks cover diagram reasoning, spatial and geometric relations, symbolic calculation, and compositional visual reasoning. We report the benchmark-specific accuracy or score returned by the LMMs-Eval toolkit~\citep{zhang2025lmms}.

For general multimodal understanding, we use GQA~\citep{hudson2019gqa}, MMBench~\citep{liu2025mmbench}, MMStar, and LLaVA-in-the-Wild. These benchmarks test visual recognition, scene understanding, instruction following, and open-ended multimodal reasoning. For LLaVA-in-the-Wild, we report the overall score and its complex reasoning, conversation, and detail sub-scores.

For hallucination-sensitive evaluation, we use HallusionBench~\citep{guan2024hallusionbench}. We report answer-level accuracy (aAcc), figure-level accuracy (fAcc), question-pair accuracy (qAcc), and their average. These metrics test whether the model can avoid language-driven or visually implausible answers under controlled multimodal question pairs.

For cross-view stability, we additionally evaluate original-rotation pairs from GeoQA~\citep{chen2021geoqa}. This evaluation is not used for training; it measures whether the model preserves its correctness state under answer-preserving rotations.

\subsection{Baselines}
\label{apx:baseline}
We compare against the following baselines. \textbf{Base} denotes the original Qwen2.5-VL checkpoint without task-specific post-training. \textbf{SFT} denotes supervised fine-tuning on the same training data and response format. \textbf{GRPO} denotes standard group-relative policy optimization with verifiable answer rewards~\citep{shao2024deepseekmath}. \textbf{Hint-GRPO} denotes a hint-enhanced GRPO baseline~\citep{huang2025boosting}. ConsistRoll is applied to both GRPO and Hint-GRPO by modifying only the rollout grouping and reward construction.

\subsection{More Ablation}
In Table~\ref{tab:lambda-ablation}, \textit{w/ Consist-only} denotes a degenerate variant where the consistency bonus is applied solely based on answer agreement across views, without checking whether either answer is correct.
As shown in Table~\ref{tab:lambda-ablation}, this variant brings limited or even negative gains compared with the full ConsistRoll objective.
This result is consistent with our theoretical analysis: answer agreement alone is not a reliable learning signal, since two transformed views can produce the same incorrect answer and still receive a positive consistency reward.
In contrast, ConsistRoll gates the consistency bonus by correctness, so the cross-view signal reinforces invariant reasoning only when both views lead to the verified answer.
The gap between \textit{w/ Consist-only} and ConsistRoll therefore supports the necessity of the correctness-aware consistency design.

\section{Additional Related Work}
\label{apx:related}

\paragraph{RLVR and GRPO for multimodal reasoning.}
Reinforcement learning has become a central post-training tool for large language models, from RLHF with learned reward models~\citep{ziegler2019rlhf,ouyang2022instructgpt} to reinforcement learning with verifiable rewards (RLVR), where rule-based answer checkers provide scalable supervision. GRPO~\citep{shao2024deepseekmath} removes the critic by normalizing rewards within a sampled group, and has been widely adopted by recent reasoning models such as DeepSeek-R1~\citep{guo2025deepseekr1} and Kimi k1.5~\citep{team2025kimi}. This paradigm has quickly moved into the multimodal domain. Visual-RFT~\citep{liu2025visualreasoning}, Vision-R1~\citep{huang2025visionr1}, Reason-RFT~\citep{tan2025reasonrft}, R1-VL~\citep{zhang2025r1vl}, and VLM-R1~\citep{shen2025vlmr1} show that GRPO-style RL can improve visual perception, mathematical reasoning, and out-of-domain generalization of MLLMs. More recent variants refine the reward or advantage estimator, including step-wise reasoning rewards, visual perception rewards, and reward sharing across transformed question variants~\citep{zhang2025r1vl,xiao2025perceptionr1,yao2025sharegrpo}. ConsistRoll is complementary to this line: rather than designing denser rewards, it changes the comparison unit of RLVR by placing answer-preserving visual views in the same group.

\paragraph{Multimodal preference optimization and hallucination alignment.}
Another line of work improves MLLM reliability through preference learning \citep{zou2024look,zou2026learning,zheng2024reefknot,zhou2025mitigating}. LLaVA-RLHF adapts RLHF to large multimodal models with factually augmented reward modeling and introduces MMHal-Bench for hallucination evaluation~\citep{sun2023llavarlhf}. RLHF-V uses fine-grained correctional human feedback to reduce hallucinated content~\citep{yu2024rlhfv}. DPO~\citep{rafailov2023dpo} has also been extended to multimodal settings, including mDPO, V-DPO, and CHiP~\citep{wang2024mdpo,xie2024vdpo,fu2025chip}. These methods typically require human, model-generated, or synthetically constructed preference pairs. ConsistRoll targets a different regime: tasks with verifiable answers but no preference annotations. Its preference-like signal is generated online from a correctness-aware cross-view predicate.

\paragraph{Data augmentation, consistency regularization, and invariance.}
Data augmentation is a standard way to inject invariances into vision systems. Classical and automated policies such as AutoAugment, RandAugment, and AugMix improve robustness by exposing models to label-preserving transformations~\citep{cubuk2019autoaugment,cubuk2020randaugment,hendrycks2020augmix,zou2023dpnet}. Prediction consistency across perturbations has also been widely studied in semi-supervised and robust learning, including Mean Teacher, Virtual Adversarial Training, and UDA~\citep{tarvainen2017mean,miyato2018vat,xie2020uda}. These approaches add probability-matching losses, whereas ConsistRoll operates in a language-generation RL setting where answers are sampled, parsed, and scored by verifiable rewards. This distinction matters because consistency alone can be maximized by repeating the same wrong answer, while correctness alone reduces to pointwise RLVR.

\begin{table*}[t]
\centering
\renewcommand{\arraystretch}{1.15}
\setlength{\tabcolsep}{4.2pt}
\caption{Method-level comparison with representative RLVR-based MLLM reasoning methods. Most prior methods improve the reward design, data construction, or exploration space. ConsistRoll instead changes the RL comparison unit by coupling semantically invariant visual views and applying a correctness-aware consistency reward, while keeping the total rollout budget unchanged.}
\label{tab:method-feature-comparison}
\resizebox{\linewidth}{!}{
\begin{tabular}{l c c c c c c >{\centering\arraybackslash}p{2.9cm}}
\toprule[1.15pt]
\textbf{Method}
& \makecell[c]{\textbf{RLVR}\\\textbf{Training}}
& \makecell[c]{\textbf{Variant}\\\textbf{Source}}
& \makecell[c]{\textbf{Cross-Variant}\\\textbf{Credit}}
& \makecell[c]{\textbf{Answer-Preserving}\\\textbf{Visual Views}}
& \makecell[c]{\textbf{Correctness-Aware}\\\textbf{Consistency}}
& \makecell[c]{\textbf{No Extra}\\\textbf{Rollouts}}
& \makecell[c]{\textbf{Modified}\\\textbf{Component(s)}} \\
\midrule
GRPO~\citep{shao2024deepseekmath}
& \cmark & -- & \xmark & \xmark & \xmark & \cmark & group-normalized outcome reward \\
Visual-RFT~\citep{liu2025visualreasoning}
& \cmark & task verifier & \xmark & \xmark & \xmark & \cmark & task-specific verifiable rewards \\
Vision-R1~\citep{huang2025visionr1}
& \cmark & cold-start CoT & \xmark & \xmark & \xmark & \cmark & cold-start data and training schedule \\
Reason-RFT~\citep{tan2025reasonrft}
& \cmark & curated CoT data & \xmark & \xmark & \xmark & \cmark & two-stage SFT--RL training \\
R1-VL~\citep{zhang2025r1vl}
& \cmark & reasoning steps & \xmark & \xmark & \xmark & \cmark & step-wise reward design \\
VLM-R1~\citep{shen2025vlmr1}
& \cmark & task ground truth & \xmark & \xmark & \xmark & \cmark & rule-based rewards and data filtering \\
Perception-R1~\citep{xiao2025perceptionr1}
& \cmark & visual annotations & \xmark & \xmark & \xmark & \cmark & perception reward with LLM judge \\
NoisyRollout~\citep{liu2025noisyrollout}
& \cmark & clean/noisy images & \textcolor{orange!85!black}{$\triangle$} & \textcolor{orange!85!black}{$\triangle$} & \xmark & \cmark & rollout-level data augmentation \\
\rowcolor{lightgray}
 Share-GRPO~\citep{yao2025sharegrpo}
& \cellcolor{lightgray}\cmark & \cellcolor{lightgray}question variants & \cellcolor{lightgray}\cmark & \cellcolor{lightgray}\xmark & \cellcolor{lightgray}\xmark & \cellcolor{lightgray}\textcolor{orange!85!black}{$\triangle$} & \cellcolor{lightgray}trajectory and reward sharing \\
Hint-GRPO~\citep{huang2025boosting}
& \cmark & adaptive hints & \xmark & \xmark & \xmark & \textcolor{orange!85!black}{$\triangle$} & hint-based reward and text-bias calibration \\
\rowcolor{lightgray}
\textbf{ConsistRoll (ours)}
& \cellcolor{lightgray}\cmark & \cellcolor{lightgray}\makecell[c]{original / transformed\\visual views} & \cellcolor{lightgray}\cmark & \cellcolor{lightgray}\cmark & \cellcolor{lightgray}\cmark & \cellcolor{lightgray}\cmark & \cellcolor{lightgray}rollout grouping and consistent rewarding \\
\bottomrule[1.15pt]
\end{tabular}
}
\vspace{0.25em}
\begin{flushleft}
\small
\textbf{Note:} Where \cmark\; indicates that the property is explicitly used, \xmark\; indicates that it is not used, and \textcolor{orange!85!black}{$\triangle$}  indicates a partial or indirect use. ``No extra rollouts'' is marked relative to a fixed GRPO-style rollout budget, ConsistRoll keeps the total number of generated responses unchanged by allocating half of the rollouts to each view.
\end{flushleft}  \vspace{-2em}
\end{table*}

\paragraph{Inductive biases from symmetry and invariance.}
The broader motivation of ConsistRoll follows the principle that learning systems generalize better when their inductive biases match task symmetries. Convolutional networks encode translation locality and weight sharing~\citep{lecun1989}; group-equivariant CNNs and steerable networks build in rotation/reflection equivariance~\citep{cohen2016group,weiler2019general}; Deep Sets encode permutation invariance~\citep{zaheer2017deepsets}; and geometric deep learning provides a general framework for symmetry-aware architectures~\citep{bronstein2021geometric}. These works encode invariance structurally. ConsistRoll encodes it algorithmically through the RL objective, which is useful for modern MLLMs where changing the architecture or pre-training recipe is expensive.

\paragraph{Shortcut learning and spurious visual cues.}
Vision models often exploit texture, background, or dataset-specific cues rather than the intended semantic rule~\citep{geirhos2020shortcut,hermann2020shapes}. Group DRO and related methods mitigate spurious correlations when group annotations or environments are available~\citep{sagawa2020group}. In RLVR-trained MLLMs, the analogous failure is reward-level: a policy can be rewarded for correct answers on canonical views while relying on view-specific shortcuts that fail under equivalent transformations. ConsistRoll addresses this failure without group labels by constructing the comparison online.

\subsection{Method Feature Comparison}
\label{apx:method-feature-comparison}

Table~\ref{tab:method-feature-comparison} summarizes the key design differences between ConsistRoll and representative RLVR-based MLLM reasoning methods. Existing approaches mainly improve multimodal RL through stronger verifiable rewards, cold-start reasoning data, step-wise reasoning supervision, perception rewards, hint-based exploration, or variant sharing over an expanded question space. By contrast, ConsistRoll targets a different failure mode: standard RLVR can reward each view independently and thus does not penalize view-dependent shortcuts. ConsistRoll addresses this issue by placing semantically invariant visual views in the same rollout group and assigning a joint reward only when paired completions are both correct and answer-consistent. This design turns answer-preserving transformations into an explicit cross-view credit-assignment signal without changing the backbone model or increasing the total rollout budget.

\section{More Implementation}
\subsection{Construction of the Semantically-Invariant Training Subset}
\label{app:invariant_subset}

Algorithm~\ref{alg:consistroll} applies the view-coupled reward only to samples whose answers are expected to remain unchanged under the adopted visual transformations. 
Starting from the 8,082 training samples, we construct this semantically-invariant subset using a rule-based filtering procedure over the question text. 
Specifically, we remove VQA instances that contain view-dependent spatial expressions, including absolute or relative orientation cues such as left/right, upper/lower, top/bottom, front/behind, east/west/south/north, clockwise/counterclockwise, and other terms indicating position, direction, or ordering in the image. 
These questions may change their correct answers after geometric transformations and are therefore not suitable for consistency-based reward coupling.

After filtering, we obtain approximately 7,000 samples as the semantically-invariant training subset. 
This filtering step uses only the question text and does not require additional human annotations. 
For samples outside this subset, we keep the standard GRPO-style correctness reward and do not apply the cross-view consistency bonus, avoiding incorrect consistency constraints on transformation-sensitive questions.

\section{More Results}
\subsection{Full Hyperparameter Ablation Results}
\label{apx:lambda-full}

We provide the full benchmark-level results for the consistency weight $\lambda$ in Tables~\ref{tab:lambda-full-math} and~\ref{tab:lambda-full-general}. These tables complement the compact main-paper ablation by reporting every metric in the source spreadsheet while following the same style as Tables~\ref{tab1:main} and~\ref{tab2:main}.

\begin{table}[t]
    \centering
    \setlength{\tabcolsep}{3.0pt}
    \footnotesize
    \caption{Full $\lambda$ ablation results on math and reasoning benchmarks from ConsistRoll. All results are reported on Qwen2.5-VL-7B.}
    \vspace{0.25em}
    \label{tab:lambda-full-math}
    \resizebox{\linewidth}{!}{
    \begin{tabular}{l *{10}{>{\centering\arraybackslash}p{3.5em}}}
        \toprule[1.25pt]
        \multirow{2}{*}{\textbf{Setting}} & \multirow{2}{*}{\makecell[c]{\textbf{Math-}\\\textbf{Verse}}} & \multirow{2}{*}{\makecell[c]{\textbf{Math-}\\\textbf{Vision}}} & \multirow{2}{*}{\makecell[c]{\textbf{Math-}\\\textbf{Vista}}} & \multirow{2}{*}{\makecell[c]{\textbf{MMLU-}\\\textbf{ProMath}}} & \multirow{2}{*}{\makecell[c]{\textbf{JM$^{3}$U}\\\textbf{-Math}}} & \multirow{2}{*}{\makecell[c]{\textbf{Dyna-}\\\textbf{Math}}} & \multirow{2}{*}{\makecell[c]{\textbf{Logic-}\\\textbf{Vista}}} & \multicolumn{2}{c}{\textbf{MMStar}} & \multirow{2}{*}{\makecell[c]{\textbf{SciBench}\\\textbf{Math}}} \\
        \cmidrule[0.5pt](lr){9-10}
        & & & & & & & & \textbf{Logical} & \textbf{Math} & \\
        \midrule
        Qwen2.5-VL-7B & 52.41 & 26.64 & 68.60 & 51.00 & 36.67 & 47.86 & 41.96 & 60.97 & 66.31 & 45.58 \\
        GRPO & 53.58 & 27.63 & 68.00 & 52.48 & 33.33 & 51.92 & 41.96 & 60.80 & 65.47 & 49.66 \\
        $\lambda=0.0$ & 52.10 & 27.63 & 68.10 & 53.11 & 30.00 & 52.08 & 42.41 & 64.22 & 66.22 & 51.70 \\
        \cc ConsistRoll \scriptsize{(Ours)} & \cc 54.25 & \cc 28.89 & \cc 69.20 & \cc 53.22 & \cc 40.00 & \cc 52.61 & \cc 45.76 & \cc 63.37 & \cc 67.78 & \cc 48.30 \\
        \midrule
        $\lambda=0.1$ & 52.17 & 28.95 & 68.10 & 52.11 & 36.67 & 50.88 & 44.64 & 62.51 & 67.80 & 51.02 \\
        $\lambda=0.3$ & 51.56 & 29.93 & 68.80 & 54.33 & 33.33 & 51.88 & 44.20 & 62.82 & 66.41 & 46.94 \\
        $\lambda=0.5$ & 54.25 & 28.89 & 69.20 & 53.22 & 40.00 & 52.61 & 45.76 & 63.37 & 67.78 & 48.30 \\
        $\lambda=0.7$ & 50.37 & 28.95 & 69.00 & 52.18 & 36.67 & 51.76 & 45.54 & 63.62 & 65.26 & 48.30 \\
        $\lambda=0.8$ & 50.98 & 28.62 & 69.10 & 50.85 & 36.67 & 50.90 & 42.86 & 61.88 & 67.00 & 46.94 \\
        $\lambda=1.0$ & 51.36 & 30.92 & 67.90 & 52.48 & 33.33 & 52.02 & 46.21 & 61.66 & 66.14 & 45.58 \\
        Consistency-only & 50.52 & 26.97 & 68.40 & 51.96 & 33.33 & 50.70 & 43.75 & 62.51 & 64.51 & 46.26 \\
        \bottomrule[1.25pt]
    \end{tabular}
    }
    \vspace{-0.8em}
\end{table}

\begin{table}[t]
    \centering
    \setlength{\tabcolsep}{3.0pt}
    \footnotesize
    \caption{Full $\lambda$ ablation results on general and hallucination benchmarks from ConsistRoll. All results are reported on Qwen2.5-VL-7B. A-Acc, F-Acc, and Q-Acc denote the answer-level, figure-level, and question-pair accuracies of HallusionBench.}
    \vspace{0.25em}
    \label{tab:lambda-full-general}
    \resizebox{\linewidth}{!}{
    \begin{tabular}{l *{12}{>{\centering\arraybackslash}p{3.0em}}}
        \toprule[1.25pt]
        \multirow{2}{*}{\textbf{Setting}} & \multirow{2}{*}{\textbf{GQA}} & \multirow{2}{*}{\makecell[c]{\textbf{MMB}\\\textbf{Acc.}}} & \multirow{2}{*}{\makecell[c]{\textbf{MMStar}\\\textbf{Avg.}}} & \multicolumn{4}{c}{\textbf{LLaVA in the Wild}} & \multicolumn{2}{c}{\textbf{JMMMU-Pro}} & \multicolumn{3}{c}{\textbf{HallusionBench}} \\
        \cmidrule[0.5pt](lr){5-8} \cmidrule[0.5pt](lr){9-10} \cmidrule[0.5pt](lr){11-13}
        & & & & \textbf{All} & \textbf{Complex} & \textbf{Conv} & \textbf{Detail} & \textbf{Standard} & \textbf{Vision} & \textbf{A-Acc} & \textbf{F-Acc} & \textbf{Q-Acc} \\
        \midrule
        Qwen2.5-VL-7B & 60.81 & 82.82 & 62.09 & 75.80 & 85.90 & 83.00 & 51.90 & 46.06 & 45.15 & 62.67 & 36.42 & 36.04 \\
        GRPO & 61.04 & 83.16 & 61.45 & 68.30 & 72.70 & 78.80 & 49.30 & 46.29 & 46.44 & 63.41 & 38.44 & 37.14 \\
        $\lambda=0.0$ & 60.88 & 82.82 & 62.14 & 69.70 & 75.00 & 79.20 & 49.60 & 43.56 & 43.86 & 64.04 & 38.73 & 38.02 \\
        \cc ConsistRoll \scriptsize{(Ours)} & \cc 60.83 & \cc 83.16 & \cc 63.24 & \cc 76.10 & \cc 82.40 & \cc 85.60 & \cc 54.90 & \cc 46.67 & \cc 46.44 & \cc 63.93 & \cc 39.60 & \cc 38.24 \\
        \midrule
        $\lambda=0.1$ & 60.88 & 83.76 & 63.33 & 74.70 & 76.80 & 78.10 & 51.50 & 46.36 & 44.24 & 64.46 & 39.31 & 39.12 \\
        $\lambda=0.3$ & 60.76 & 82.91 & 62.47 & 75.40 & 86.30 & 81.80 & 51.10 & 45.61 & 45.38 & 63.20 & 37.86 & 37.36 \\
        $\lambda=0.5$ & 60.83 & 83.16 & 63.24 & 76.10 & 82.40 & 85.60 & 54.90 & 46.67 & 46.44 & 63.93 & 39.60 & 38.24 \\
        $\lambda=0.7$ & 60.94 & 83.33 & 62.43 & 72.30 & 75.50 & 85.70 & 51.10 & 46.82 & 46.89 & 63.51 & 38.15 & 38.24 \\
        $\lambda=0.8$ & 60.81 & 83.16 & 62.57 & 71.10 & 79.80 & 74.60 & 53.00 & 47.65 & 46.74 & 64.77 & 39.02 & 38.90 \\
        $\lambda=1.0$ & 61.11 & 82.99 & 62.60 & 71.40 & 75.20 & 81.40 & 53.30 & 46.59 & 47.12 & 64.04 & 39.02 & 37.58 \\
        Consistency-only & 60.87 & 82.73 & 62.77 & 69.80 & 77.40 & 75.80 & 47.40 & 45.76 & 46.06 & 63.93 & 39.60 & 38.24 \\
        \bottomrule[1.25pt]
    \end{tabular}
    }
    \vspace{-0.8em}
\end{table}

\subsection{Cross-View Stability Metrics}
\label{app:cross-view-stability}

We provide the detailed definitions of the cross-view stability metrics used in Table~\ref{tab:cross-view-stability}. 
For each original-rotation pair, let $o_i^{(0)}$ and $o_i^{(1)}$ denote the responses to the original and rotated views, respectively. 
Their correctness indicators are defined as $c_i^{(0)}=\mathbf{1}[\hat a(o_i^{(0)})=y_i^*]$ and $c_i^{(1)}=\mathbf{1}[\hat a(o_i^{(1)})=y_i^*]$. 
Given $N$ paired samples, we compute rotation-induced degradation, rotation-induced improvement, total flips, and correctness-state consistency as
\begin{equation}
\label{eq:cross-view-stability-app}
\begin{aligned}
D_{\mathrm{rot}}
&=\sum_{i=1}^{N} c_i^{(0)}(1-c_i^{(1)}),
\qquad
I_{\mathrm{rot}}
=\sum_{i=1}^{N} (1-c_i^{(0)})c_i^{(1)}, \\
F_{\mathrm{rot}}
&=D_{\mathrm{rot}}+I_{\mathrm{rot}},
\qquad
C_{\mathrm{rot}}
=1-\frac{F_{\mathrm{rot}}}{N}.
\end{aligned}
\end{equation}
Here $D_{\mathrm{rot}}$ counts cases where the model is correct on the original image but becomes wrong after rotation, while $I_{\mathrm{rot}}$ counts the opposite direction. 
Their sum $F_{\mathrm{rot}}$ measures the number of cross-view correctness-state changes, and $C_{\mathrm{rot}}$ measures the fraction of samples whose correctness state remains stable across the two views.

The results show that ConsistRoll substantially reduces rotation-induced instability. 
Compared with GRPO, ConsistRoll decreases $D_{\mathrm{rot}}$ from $65$ to $43$ and $F_{\mathrm{rot}}$ from $120$ to $94$, yielding a $3.45$-point improvement in $C_{\mathrm{rot}}$. 
The larger reduction in degradation than improvement is particularly important: ConsistRoll is better at preserving correct predictions when the visual input undergoes an answer-preserving transformation.

\section{Limitations}
\label{apx:limitations}

\paragraph{Dependence on valid answer-preserving transformations.}
ConsistRoll assumes that the transformed view preserves the ground-truth answer. This assumption is natural for transformations such as moderate rotations in many geometry or object-centric reasoning tasks, but it is not universal. Orientation-sensitive tasks, OCR-heavy images, fine-grained counting, spatial-language questions, or questions involving ``left'', ``right'', ``top'', and ``bottom'' may change their semantics after transformation. In such cases, applying a cross-view consistency reward can introduce incorrect supervision. In practice, ConsistRoll should therefore be used with transformation families that are either theoretically answer-preserving for the task or filtered by task-specific validity checks.

\paragraph{Verifier and parser dependence.}
Like other RLVR methods, ConsistRoll relies on a rule-based verifier to determine whether a generated answer is correct. Parser errors, ambiguous answer formats, or noisy ground-truth annotations can directly affect both the correctness reward and the correctness-gated consistency bonus. Although the response format reduces parsing ambiguity, the method does not solve the broader problem of open-ended multimodal verification. Extending ConsistRoll to free-form reasoning, long-form answers, or tasks without exact answer checkers would require stronger semantic verifiers.

\paragraph{Computation and rollout allocation.}
ConsistRoll keeps the total rollout budget identical to GRPO by allocating half of the rollouts to the original view and half to the transformed view. This design avoids additional generation cost, but it also reduces the number of samples per individual view compared with a same-view GRPO group of equal size.

\section{Broader Impacts}
\label{apx:broader-impacts}

ConsistRoll is a general training strategy for improving the stability of multimodal reasoning under answer-preserving visual transformations. Its main positive impact is to make MLLMs less sensitive to superficial visual changes, which can improve reliability in applications involving diagrams, educational content, scientific figures, geometric reasoning, and hallucination-sensitive visual question answering. Because the method does not require extra human annotations or architectural changes, it may also make robustness-oriented post-training more accessible to research groups with limited annotation budgets.

At the same time, more robust multimodal reasoning systems can be misused or over-trusted. In high-stakes contexts such as medical image analysis, legal evidence review, surveillance, or safety-critical robotics, improved cross-view stability should not be interpreted as sufficient evidence of correctness. Stable predictions can still be wrong when the task falls outside the verifier, the transformation changes the semantics, or the input distribution differs from the evaluated benchmarks. Deployment in such settings would require domain-specific validation, uncertainty estimation, human oversight, and compliance with relevant privacy and safety standards.


\section{Proofs and Supporting Analysis}
\label{app:proofs}

This appendix provides the detailed proof that cross-view consistency is a valid and learnable inductive bias for ConsistRoll. 
We first prove Proposition~\ref{prop:view-coupled-policy-improvement}. 
We then give two supporting results: a distributional compatibility lemma and a degeneracy analysis for consistency-only rewards. 
Finally, we connect the population objective to the finite-sample GRPO advantage used in the algorithm.

\subsection{Proof of Proposition~\ref{prop:view-coupled-policy-improvement}}
\label{app:proof-view-coupled}

The proof has three parts. 
First, we show \emph{validity}: under semantically-invariant transformations, the correctness-gated consistency objective is aligned with the invariant target and does not reward wrong-but-consistent predictions. 
Second, we show \emph{learnability}: the bias induces a concrete policy-gradient term that can be estimated from paired rollouts. 
Third, we prove the standard convergence-to-stationarity statement for stochastic gradient ascent on the induced population objective.

\paragraph{Step 1: notation and population objective.}
For a semantically-invariant pair, write
\begin{equation}
    x_0=(I,q),
    \qquad
    x_1=(T_\alpha(I),q),
    \qquad
    \pi_v(o)=\pi_\theta(o\mid x_v),
    \quad v\in\{0,1\}.
\end{equation}
Let $\mathcal O$ be the response space. 
The verifiable correctness indicator and the correctness-gated consistency bonus are
\begin{equation}
\label{eq:app-indicators}
    c_v(o)=\mathbbm{1}[\hat a(o)=y^*],
    \qquad
    b(o_0,o_1)=c_0(o_0)c_1(o_1)\mathbbm{1}[\kappa(o_0,o_1)=1].
\end{equation}
Because the verifier and parser are fixed, $c_v$ and $b$ do not directly depend on $\theta$. 
The unclipped ConsistRoll population objective is therefore
\begin{equation}
\label{eq:app-cr-objective-expanded}
\begin{aligned}
    \mathcal J_{\mathrm{CR}}(\theta)
    &=
    \mathbb E_{o_0\sim\pi_0,\,o_1\sim\pi_1}
    \left[
    \frac{1}{2}\big(c_0(o_0)+c_1(o_1)\big)+\lambda b(o_0,o_1)
    \right] \\
    &=
    \sum_{o_0\in\mathcal O}\sum_{o_1\in\mathcal O}
    \pi_0(o_0)\pi_1(o_1)
    \left[
    \frac{1}{2}\big(c_0(o_0)+c_1(o_1)\big)+\lambda b(o_0,o_1)
    \right] .
\end{aligned}
\end{equation}
This is the population counterpart of the reward used inside the clipped GRPO surrogate. 
As usual in policy-gradient analysis, we assume that differentiation can be exchanged with the finite or countable summation over responses.

\paragraph{Step 2: validity from semantic invariance.}
By Assumption~\ref{asm:semantic-invariance}, both views share the same ground-truth answer $y^*$. 
Therefore, the Bayes-correct answer is invariant across the pair: the desired behavior is not merely high accuracy on one view, but high accuracy on both views with the same parsed answer. 
This is the target that the consistency bias should favor.

Assume that the parser canonicalizes correct answers. 
Then, whenever $c_0(o_0)=1$ and $c_1(o_1)=1$, both completions parse to $y^*$, and hence $\kappa(o_0,o_1)=1$. 
Thus the correctness-gated bonus reduces to
\begin{equation}
\label{eq:app-bonus-reduces-product}
    b(o_0,o_1)=c_0(o_0)c_1(o_1).
\end{equation}
Define the view-level correctness probabilities
\begin{equation}
\label{eq:app-pv-def}
    p_v(\theta)
    :=
    \Pr_{o\sim\pi_\theta(\cdot\mid x_v)}[\hat a(o)=y^*]
    =
    \mathbb E_{o\sim\pi_\theta(\cdot\mid x_v)}[c_v(o)],
    \qquad v\in\{0,1\}.
\end{equation}
Since $o_0$ and $o_1$ are sampled independently conditioned on their views, Eq.~\eqref{eq:app-bonus-reduces-product} gives
\begin{equation}
\label{eq:app-bonus-factorization}
\begin{aligned}
    \mathbb E[b(o_0,o_1)]
    &=
    \mathbb E[c_0(o_0)c_1(o_1)] \\
    &=
    \left(\mathbb E_{o_0\sim\pi_\theta(\cdot\mid x_0)}[c_0(o_0)]\right)
    \left(\mathbb E_{o_1\sim\pi_\theta(\cdot\mid x_1)}[c_1(o_1)]\right) \\
    &=
    p_0(\theta)p_1(\theta).
\end{aligned}
\end{equation}
Substituting Eq.~\eqref{eq:app-pv-def} and Eq.~\eqref{eq:app-bonus-factorization} into Eq.~\eqref{eq:app-cr-objective-expanded} yields
\begin{equation}
\label{eq:app-prob-objective}
    \mathcal J_{\mathrm{CR}}(\theta)
    =
    \frac{1}{2}\big(p_0(\theta)+p_1(\theta)\big)
    +
    \lambda p_0(\theta)p_1(\theta).
\end{equation}
As a function of $(p_0,p_1)\in[0,1]^2$, this objective satisfies
\begin{equation}
\label{eq:app-prob-monotone}
    \frac{\partial \mathcal J_{\mathrm{CR}}}{\partial p_0}
    =
    \frac{1}{2}+\lambda p_1
    >0,
    \qquad
    \frac{\partial \mathcal J_{\mathrm{CR}}}{\partial p_1}
    =
    \frac{1}{2}+\lambda p_0
    >0,
    \qquad \lambda\geq 0 .
\end{equation}
Therefore, the population objective is strictly increasing in the correctness probability of each view. 
If the policy class can realize $p_0=p_1=1$, the global maximizer of the probability-level objective is attained at $p_0=p_1=1$. 
This proves alignment with the invariant target: the best solution is correct on both semantically equivalent views.

The same calculation also rules out the main invalid-bias failure mode. 
A policy that is perfectly consistent but always emits an incorrect canonical answer $a_0\neq y^*$ has $p_0=p_1=0$. 
By Eq.~\eqref{eq:app-prob-objective}, it obtains
\begin{equation}
    \mathcal J_{\mathrm{CR}}=0,
\end{equation}
whereas any policy with nonzero correctness on either view obtains a strictly positive correctness reward. 
Thus, the ConsistRoll bonus does not make wrong-but-consistent predictions optimal. 
The consistency signal is valid precisely because it is gated by dual correctness.

\paragraph{Step 3: learnability via the policy-gradient identity.}
We now show that the valid bias is also learnable: it appears as a differentiable policy-gradient signal and can be estimated from paired rollouts.
For one view $v$, the expected correctness reward is
\begin{equation}
    C_v(\theta)
    :=
    \mathbb E_{o\sim\pi_\theta(\cdot\mid x_v)}[c_v(o)]
    =
    \sum_{o\in\mathcal O}c_v(o)\pi_\theta(o\mid x_v).
\end{equation}
Taking the gradient gives
\begin{equation}
\label{eq:app-single-view-grad-steps}
\begin{aligned}
    \nabla_\theta C_v(\theta)
    &=
    \sum_{o\in\mathcal O}c_v(o)\nabla_\theta\pi_\theta(o\mid x_v)  \\
    &=
    \sum_{o\in\mathcal O}c_v(o)\pi_\theta(o\mid x_v)
    \nabla_\theta\log\pi_\theta(o\mid x_v) \\
    &=
    \mathbb E_{o\sim\pi_\theta(\cdot\mid x_v)}
    \left[
    c_v(o)\nabla_\theta\log\pi_\theta(o\mid x_v)
    \right].
\end{aligned}
\end{equation}
The second equality uses the score-function identity
$\nabla_\theta\pi_\theta(o\mid x_v)=\pi_\theta(o\mid x_v)\nabla_\theta\log\pi_\theta(o\mid x_v)$.

For the paired bonus, define
\begin{equation}
    B(\theta)
    :=
    \mathbb E_{o_0\sim\pi_0,\,o_1\sim\pi_1}[b(o_0,o_1)]
    =
    \sum_{o_0,o_1} b(o_0,o_1)\pi_0(o_0)\pi_1(o_1).
\end{equation}
Applying the product rule gives
\begin{equation}
\label{eq:app-bonus-grad-product}
\begin{aligned}
    \nabla_\theta B(\theta)
    &=
    \sum_{o_0,o_1} b(o_0,o_1)
    \left[
    \nabla_\theta\pi_0(o_0)\pi_1(o_1)
    +
    \pi_0(o_0)\nabla_\theta\pi_1(o_1)
    \right] \\
    &=
    \sum_{o_0,o_1} b(o_0,o_1)\pi_0(o_0)\pi_1(o_1)
    \left[
    \nabla_\theta\log\pi_0(o_0)
    +
    \nabla_\theta\log\pi_1(o_1)
    \right] \\
    &=
    \mathbb E_{o_0,o_1\sim\pi_\theta}
    \left[
    b(o_0,o_1)
    \left(
    \nabla_\theta\log\pi_\theta(o_0\mid x_0)
    +
    \nabla_\theta\log\pi_\theta(o_1\mid x_1)
    \right)
    \right].
\end{aligned}
\end{equation}
Combining Eq.~\eqref{eq:app-single-view-grad-steps} for $v=0,1$ with Eq.~\eqref{eq:app-bonus-grad-product} yields
\begin{equation}
\label{eq:app-full-gradient-decomposition}
\begin{aligned}
\nabla_\theta \mathcal J_{\mathrm{CR}}(\theta)
&=
\frac{1}{2}
\sum_{v=0}^{1}
\mathbb E_{o_v\sim\pi_\theta(\cdot\mid x_v)}
\left[
    c_v(o_v)\nabla_\theta\log\pi_\theta(o_v\mid x_v)
\right] \\
&\quad+
\lambda
\mathbb E_{o_0,o_1\sim\pi_\theta}
\left[
    b(o_0,o_1)
    \left(
    \nabla_\theta\log\pi_\theta(o_0\mid x_0)
    +
    \nabla_\theta\log\pi_\theta(o_1\mid x_1)
    \right)
\right].
\end{aligned}
\end{equation}
The first line is the independent correctness gradient. 
The second line is the learnable cross-view consistency gradient. 
It is estimated using the same paired rollouts used to compute the ConsistRoll reward; no additional labels or model components are required.

\paragraph{Step 4: cross-view correction relative to data augmentation.}
Under the canonical-parser condition, Eq.~\eqref{eq:app-prob-objective} gives
\begin{equation}
\label{eq:app-cr-prob-gradient}
    \nabla_\theta\mathcal J_{\mathrm{CR}}(\theta)
    =
    \left(\frac{1}{2}+\lambda p_1(\theta)\right)\nabla_\theta p_0(\theta)
    +
    \left(\frac{1}{2}+\lambda p_0(\theta)\right)\nabla_\theta p_1(\theta).
\end{equation}
Independent data augmentation optimizes the two views separately:
\begin{equation}
\label{eq:app-da-objective-gradient}
    \mathcal J_{\mathrm{DA}}(\theta)
    =
    \frac{1}{2}\big(p_0(\theta)+p_1(\theta)\big),
    \qquad
    \nabla_\theta\mathcal J_{\mathrm{DA}}(\theta)
    =
    \frac{1}{2}\nabla_\theta p_0(\theta)
    +
    \frac{1}{2}\nabla_\theta p_1(\theta).
\end{equation}
Subtracting Eq.~\eqref{eq:app-da-objective-gradient} from Eq.~\eqref{eq:app-cr-prob-gradient} yields
\begin{equation}
\label{eq:app-cross-view-correction}
    \nabla_\theta\mathcal J_{\mathrm{CR}}(\theta)
    -
    \nabla_\theta\mathcal J_{\mathrm{DA}}(\theta)
    =
    \lambda\left(
    p_1(\theta)\nabla_\theta p_0(\theta)
    +
    p_0(\theta)\nabla_\theta p_1(\theta)
    \right).
\end{equation}
This proves Eq.~\eqref{eq:cr-gradient-correction}. 
The correction has a direct learnability interpretation. 
The update for view $0$ is scaled from $\frac{1}{2}\nabla_\theta p_0$ to $(\frac{1}{2}+\lambda p_1)\nabla_\theta p_0$, and symmetrically for view $1$. 
Therefore, once either view has nonzero success probability, its semantically equivalent counterpart increases the gradient magnitude of the paired view. 
The bias is not a post-training averaging operation; it changes the direction and magnitude of the RL update.

\paragraph{Step 5: unbiased Monte Carlo estimator for the unclipped objective.}
For the unclipped population objective, an unbiased single-pair policy-gradient estimator is
\begin{equation}
\label{eq:app-unbiased-estimator}
\begin{aligned}
    g(o_0,o_1)
    &=
    \frac{1}{2}c_0(o_0)\nabla_\theta\log\pi_\theta(o_0\mid x_0)
    +
    \frac{1}{2}c_1(o_1)\nabla_\theta\log\pi_\theta(o_1\mid x_1) \\
    &\quad+
    \lambda b(o_0,o_1)
    \left(
    \nabla_\theta\log\pi_\theta(o_0\mid x_0)
    +
    \nabla_\theta\log\pi_\theta(o_1\mid x_1)
    \right).
\end{aligned}
\end{equation}
Taking expectation over $o_0\sim\pi_\theta(\cdot\mid x_0)$ and $o_1\sim\pi_\theta(\cdot\mid x_1)$ recovers Eq.~\eqref{eq:app-full-gradient-decomposition}. 
Thus,
\begin{equation}
    \mathbb E[g(o_0,o_1)]=\nabla_\theta\mathcal J_{\mathrm{CR}}(\theta).
\end{equation}
In practice, ConsistRoll uses old-policy rollouts, group-relative normalization, and clipping, exactly as in GRPO. 
These ingredients produce the implemented surrogate rather than the idealized population gradient. 
The estimator above is used only to establish that the proposed inductive bias is, in principle, learnable by standard policy-gradient optimization.

\paragraph{Step 6: convergence to first-order stationarity.}
We now prove the non-convex stochastic gradient-ascent bound in Eq.~\eqref{eq:cr-convergence}. 
Let $\mathcal F_t$ denote the randomness up to iteration $t$. 
Assume
\begin{equation}
\label{eq:app-unbiased-variance-assumption}
    \mathbb E[g_t\mid\mathcal F_t]
    =
    \nabla_\theta\mathcal J_{\mathrm{CR}}(\theta_t),
    \qquad
    \mathbb E[\|g_t-\nabla_\theta\mathcal J_{\mathrm{CR}}(\theta_t)\|^2\mid\mathcal F_t]
    \leq
    \sigma^2 .
\end{equation}
Let $\Delta_t=g_t-\nabla_\theta\mathcal J_{\mathrm{CR}}(\theta_t)$. 
The zero-mean property of $\Delta_t$ gives
\begin{equation}
\label{eq:app-second-moment-bound}
\begin{aligned}
    \mathbb E[\|g_t\|^2\mid\mathcal F_t]
    &=
    \mathbb E[\|\nabla_\theta\mathcal J_{\mathrm{CR}}(\theta_t)+\Delta_t\|^2\mid\mathcal F_t] \\
    &=
    \|\nabla_\theta\mathcal J_{\mathrm{CR}}(\theta_t)\|^2
    +
    2\left\langle
    \nabla_\theta\mathcal J_{\mathrm{CR}}(\theta_t),
    \mathbb E[\Delta_t\mid\mathcal F_t]
    \right\rangle
    +
    \mathbb E[\|\Delta_t\|^2\mid\mathcal F_t] \\
    &\leq
    \|\nabla_\theta\mathcal J_{\mathrm{CR}}(\theta_t)\|^2+
    \sigma^2 .
\end{aligned}
\end{equation}
Since $\mathcal J_{\mathrm{CR}}$ is $L$-smooth, for the ascent update $\theta_{t+1}=\theta_t+\eta g_t$,
\begin{equation}
\label{eq:app-smoothness-step}
    \mathcal J_{\mathrm{CR}}(\theta_{t+1})
    \geq
    \mathcal J_{\mathrm{CR}}(\theta_t)
    +
    \eta\langle \nabla_\theta\mathcal J_{\mathrm{CR}}(\theta_t),g_t\rangle
    -
    \frac{L\eta^2}{2}\|g_t\|^2 .
\end{equation}
Taking conditional expectation and using Eq.~\eqref{eq:app-second-moment-bound}, we obtain
\begin{equation}
\label{eq:app-one-step-progress-detailed}
\begin{aligned}
    \mathbb E[\mathcal J_{\mathrm{CR}}(\theta_{t+1})\mid\mathcal F_t]
    &\geq
    \mathcal J_{\mathrm{CR}}(\theta_t)
    +
    \eta\|\nabla_\theta\mathcal J_{\mathrm{CR}}(\theta_t)\|^2
    -
    \frac{L\eta^2}{2}
    \left(
    \|\nabla_\theta\mathcal J_{\mathrm{CR}}(\theta_t)\|^2+
    \sigma^2
    \right) \\
    &=
    \mathcal J_{\mathrm{CR}}(\theta_t)
    +
    \eta\left(1-\frac{L\eta}{2}\right)
    \|\nabla_\theta\mathcal J_{\mathrm{CR}}(\theta_t)\|^2
    -
    \frac{L\eta^2\sigma^2}{2} .
\end{aligned}
\end{equation}
When $\eta\leq 1/L$, we have $1-L\eta/2\geq 1/2$, and hence
\begin{equation}
\label{eq:app-one-step-progress-simple}
    \mathbb E[\mathcal J_{\mathrm{CR}}(\theta_{t+1})\mid\mathcal F_t]
    \geq
    \mathcal J_{\mathrm{CR}}(\theta_t)
    +
    \frac{\eta}{2}\|\nabla_\theta\mathcal J_{\mathrm{CR}}(\theta_t)\|^2
    -
    \frac{L\eta^2\sigma^2}{2} .
\end{equation}
Taking total expectation, rearranging, and summing over $t=0,\ldots,T-1$ gives
\begin{equation}
\label{eq:app-sum-progress}
\begin{aligned}
    \frac{\eta}{2}
    \sum_{t=0}^{T-1}
    \mathbb E[\|\nabla_\theta\mathcal J_{\mathrm{CR}}(\theta_t)\|^2]
    &\leq
    \sum_{t=0}^{T-1}
    \mathbb E[\mathcal J_{\mathrm{CR}}(\theta_{t+1})-\mathcal J_{\mathrm{CR}}(\theta_t)]
    +
    \frac{T L\eta^2\sigma^2}{2} \\
    &=
    \mathbb E[\mathcal J_{\mathrm{CR}}(\theta_T)]-\mathcal J_{\mathrm{CR}}(\theta_0)
    +
    \frac{T L\eta^2\sigma^2}{2} \\
    &\leq
    \mathcal J_{\mathrm{CR}}^*-\mathcal J_{\mathrm{CR}}(\theta_0)
    +
    \frac{T L\eta^2\sigma^2}{2} .
\end{aligned}
\end{equation}
Dividing both sides by $\eta T/2$ yields
\begin{equation}
    \frac{1}{T}\sum_{t=0}^{T-1}
    \mathbb E[\|\nabla_\theta\mathcal J_{\mathrm{CR}}(\theta_t)\|^2]
    \leq
    \frac{2(\mathcal J_{\mathrm{CR}}^*-\mathcal J_{\mathrm{CR}}(\theta_0))}{\eta T}
    +
    L\eta\sigma^2,
\end{equation}
which proves Eq.~\eqref{eq:cr-convergence}. 
With $\eta=\mathcal O(T^{-1/2})$, both terms on the right-hand side scale as $\mathcal O(T^{-1/2})$. 
This proves convergence to a first-order stationary point in the standard average-gradient-norm sense.

\subsection{Compatibility of Cross-View Consistency}
\label{app:compatibility-consistency}

Proposition~\ref{prop:view-coupled-policy-improvement} is the main optimization-level statement. 
For completeness, we also record a weaker distributional compatibility result: enforcing identical answer distributions across semantically-invariant views does not reduce view-set-averaged correctness at the distribution level. 
This lemma supports the validity intuition, but it is not the main theoretical claim.

Let the answer distribution induced by a policy be
\begin{equation}
    p_\pi(a\mid I,q)=\sum_{o\in\mathcal O}\mathbbm{1}[\hat a(o)=a]\pi(o\mid I,q).
\end{equation}
A policy is cross-view consistent on $\mathcal V_i$ if
\begin{equation}
\label{eq:app-answer-dist-consistency}
    p_\pi(a\mid I_i^{(m)},q_i)
    =
    p_\pi(a\mid I_i^{(\ell)},q_i),
    \qquad
    \forall a,\; m,\ell\in[K].
\end{equation}

\begin{lemma}[Non-destructive consistency over semantically-invariant views]
\label{lem:consistent-optimum}
Under Assumption~\ref{asm:semantic-invariance}, for any policy $\pi$, there exists a cross-view consistent policy distribution $\bar\pi_i$ over $\mathcal V_i$ whose average correctness over $\mathcal V_i$ is exactly the same as that of $\pi$.
\end{lemma}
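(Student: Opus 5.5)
The plan is a symmetrization (averaging) construction at the level of answer distributions. Fix a sample $i$ with views $I_i^{(m)}=T_{\alpha_m}(I_i)$, $m\in[K]$, and abbreviate $p_m(a)=p_\pi(a\mid I_i^{(m)},q_i)$. Define the view-averaged answer distribution
\begin{equation}
    \bar p_i(a)=\frac{1}{K}\sum_{m=1}^{K}p_m(a),
\end{equation}
and let $\bar\pi_i$ be any policy on $\mathcal V_i$ whose induced answer distribution on every view $I_i^{(m)}$ equals $\bar p_i$. The concrete choice is $\bar\pi_i(o\mid I_i^{(\ell)},q_i)=\frac{1}{K}\sum_{m}\pi(o\mid I_i^{(m)},q_i)$ for every $\ell$, i.e.\ the mixture policy that ignores which view it received.

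\emph{Consistency.} By linearity of the map $\pi\mapsto p_\pi$, the mixture induces
\begin{equation}
    p_{\bar\pi_i}(a\mid I_i^{(\ell)},q_i)=\frac{1}{K}\sum_m\sum_o\mathbbm{1}[\hat a(o)=a]\,\pi(o\mid I_i^{(m)},q_i)=\bar p_i(a)
\end{equation}
for every $\ell$. This is independent of $\ell$, so Eq.~\eqref{eq:app-answer-dist-consistency} holds.

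\emph{Correctness preservation.} Assumption~\ref{asm:semantic-invariance} gives a single target $y_i^*$ shared by all views, so the correctness of view $m$ is $p_m(y_i^*)$. The average correctness of $\pi$ is $\frac1K\sum_m p_m(y_i^*)=\bar p_i(y_i^*)$. The average correctness of $\bar\pi_i$ is $\frac1K\sum_\ell \bar p_i(y_i^*)=\bar p_i(y_i^*)$, so the two coincide exactly.

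The only real subtlety, and the step I would spend care on, is the use of Assumption~\ref{asm:semantic-invariance}. Without a common $y_i^*$, averaging answer distributions would mix probability mass across different targets, and correctness would no longer be linear in $\bar p_i$ evaluated at a single point. One also has to note that $\bar\pi_i$ is defined only on $\mathcal V_i$ (as a "policy distribution"), so realizability in the parametric class is not claimed. The remaining facts — that $\bar\pi_i$ is a valid distribution (nonnegative, sums to one) and the two linearity computations above — are routine.
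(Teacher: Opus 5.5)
Your proposal is correct and matches the paper's proof: the paper uses the same view-averaged mixture $\bar\pi_i(o\mid I_i^{(m)},q_i)=\frac{1}{K}\sum_{\ell}\pi(o\mid I_i^{(\ell)},q_i)$, checks that it is a valid distribution, gets consistency because it does not depend on $m$, and preserves average correctness by exchanging the finite sums under the shared target $y_i^*$. Your version writes the same computation in terms of the induced answer distributions $p_m(y_i^*)$ instead of summing $c_i(o)$ over responses.
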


\begin{proof}
For a fixed sample $i$, define the view-averaged distribution
\begin{equation}
\label{eq:app-view-avg-policy}
    \bar\pi_i(o\mid I_i^{(m)},q_i)
    =
    \frac{1}{K}\sum_{\ell=1}^{K}\pi(o\mid I_i^{(\ell)},q_i),
    \qquad
    \forall m\in[K].
\end{equation}
First, $\bar\pi_i$ is a valid distribution because it is an average of valid distributions:
\begin{equation}
    \sum_{o\in\mathcal O}\bar\pi_i(o\mid I_i^{(m)},q_i)
    =
    \frac{1}{K}\sum_{\ell=1}^{K}\sum_{o\in\mathcal O}\pi(o\mid I_i^{(\ell)},q_i)
    =1,
    \qquad
    \bar\pi_i(o\mid I_i^{(m)},q_i)\geq 0.
\end{equation}
Second, the right-hand side of Eq.~\eqref{eq:app-view-avg-policy} is independent of $m$, so $\bar\pi_i$ induces the same answer distribution for every view and is cross-view consistent.
Let $c_i(o)=\mathbbm{1}[\hat a(o)=y_i^*]$. 
By Assumption~\ref{asm:semantic-invariance}, the same correctness function applies to every view in $\mathcal V_i$. 
The averaged correctness of $\bar\pi_i$ is
\begin{equation}
\label{eq:app-acc-preservation}
\begin{aligned}
    \frac{1}{K}\sum_{m=1}^{K}
    \mathbb E_{o\sim \bar\pi_i(\cdot\mid I_i^{(m)},q_i)}[c_i(o)]
    &=
    \frac{1}{K}\sum_{m=1}^{K}
    \sum_{o\in\mathcal O} c_i(o)
    \left(
    \frac{1}{K}\sum_{\ell=1}^{K}\pi(o\mid I_i^{(\ell)},q_i)
    \right) \\
    &=
    \sum_{o\in\mathcal O} c_i(o)
    \left(
    \frac{1}{K}\sum_{\ell=1}^{K}\pi(o\mid I_i^{(\ell)},q_i)
    \right) \\
    &=
    \frac{1}{K}\sum_{\ell=1}^{K}
    \sum_{o\in\mathcal O}c_i(o)\pi(o\mid I_i^{(\ell)},q_i) \\
    &=
    \frac{1}{K}\sum_{\ell=1}^{K}
    \mathbb E_{o\sim \pi(\cdot\mid I_i^{(\ell)},q_i)}[c_i(o)] .
\end{aligned}
\end{equation}
Thus, $\bar\pi_i$ is cross-view consistent and exactly preserves transformation-averaged correctness.
\end{proof}

\subsection{Why Consistency Alone Is Insufficient}
\label{app:consistency-only}

The validity result relies on a correctness-gated consistency bonus. 
A consistency-only reward can be maximized by policies that are stable but wrong.

Consider two semantically invariant views $(I_i,q_i)$ and $(T(I_i),q_i)$. 
Let $o$ and $o'$ be the responses sampled from these two views. 
A naive consistency reward can be written as
\begin{equation}
\label{eq:consistency-only-reward}
    r_{\mathrm{con}}(o,o')
    =
    \mathbbm{1}[\hat a(o)=\hat a(o')] .
\end{equation}
For any fixed answer $a_0$, define a constant-answer policy $\pi_{a_0}$ that always emits a response whose parsed answer is $a_0$, independent of the image and question. 
Then every paired rollout satisfies $\hat a(o)=\hat a(o')=a_0$, so
\begin{equation}
    \mathbb E_{o,o'\sim\pi_{a_0}}[r_{\mathrm{con}}(o,o')]=1.
\end{equation}
However, if $a_0\neq y_i^*$, the same policy has zero task correctness on sample $i$:
\begin{equation}
\label{eq:consistency-degenerate}
    \mathbb E_{o\sim\pi_{a_0}}
    \left[
    \mathbbm{1}[\hat a(o)=y_i^*]
    \right]
    =0,
    \qquad
    a_0\neq y_i^* .
\end{equation}
Thus, consistency alone admits a degenerate optimum: the policy can ignore the image and question while producing the same incorrect answer across views.

ConsistRoll avoids this degeneration by making the consistency signal correctness-gated. 
For paired rollouts $(o,o')$, we use
\begin{equation}
\label{eq:correctness-gated-bonus}
    b(o,o')
    =
    \mathbbm{1}
    \left[
    \hat a(o)=y_i^*
    \;\land\;
    \hat a(o')=y_i^*
    \;\land\;
    \kappa(o,o')=1
    \right].
\end{equation}
The first two conjuncts rule out wrong-but-consistent answers, while the last conjunct explicitly favors stability across semantically invariant views. 
When the verifier maps all correct responses to a unique canonical answer, the consistency check may be logically implied by dual correctness. 
In practice, however, parsers often tolerate multiple surface forms or partially normalized expressions, so the explicit consistency check remains useful.

\subsection{How ConsistRoll Changes the GRPO Advantage}
\label{app:advantage}

We next connect the population analysis to the finite-sample GRPO advantage used by the algorithm. 
For clarity, consider one sample with two semantically invariant views: the original image $I_i$ and a transformed image $T(I_i)$. 
Let $n$ denote the total rollout budget of the coupled group and let $M=n/2$ denote the number of rollouts sampled from each view. 
Let $v\in\{0,1\}$ index the view and $k\in[M]$ index the rollout sampled from that view. 
The old policy generates responses $o_{v,k}$.

In standard augmented GRPO, each view is treated as an independent training instance. 
With the view-local correctness reward 
$c_{v,k}=\mathbbm{1}[\hat a(o_{v,k})=y_i^*]$, the advantage is normalized within each view:
\begin{equation}
\label{eq:independent-da-advantage}
    A^{\mathrm{DA}}_{v,k}
    =
    \frac{c_{v,k}-\bar c_v}{s_v+\delta},
    \qquad
    \bar c_v=\frac{1}{M}\sum_{k=1}^{M}c_{v,k},
    \qquad
    s_v^2=\frac{1}{M}\sum_{k=1}^{M}(c_{v,k}-\bar c_v)^2 .
\end{equation}
This objective increases input diversity but compares a rollout only against other rollouts from the same view.

ConsistRoll instead places both views in the same rollout group and assigns a correctness-gated pair bonus. 
For each paired rollout index $k$, define
\begin{equation}
\label{eq:consistroll-reward-def}
\begin{gathered}
    b_k
    =
    \mathbbm{1}
    \left[
    c_{0,k}=1
    \;\land\;
    c_{1,k}=1
    \;\land\;
    \kappa(o_{0,k},o_{1,k})=1
    \right],
    \qquad
    \tilde r_{v,k}=c_{v,k}+\lambda b_k,                                                        \\
    \tilde\mu
    =
    \frac{1}{n}\sum_{v=0}^{1}\sum_{k=1}^{M}\tilde r_{v,k},
    \qquad
    \tilde\sigma^2
    =
    \frac{1}{n}\sum_{v=0}^{1}\sum_{k=1}^{M}
    (\tilde r_{v,k}-\tilde\mu)^2,
    \qquad
    \tilde A_{v,k}
    =
    \frac{\tilde r_{v,k}-\tilde\mu}{\tilde\sigma+\delta} .
\end{gathered}
\end{equation}
Here $\lambda\geq 0$ controls the strength of the consistency bonus.

To make the coupling explicit, let
\begin{equation}
    \bar c=\frac{1}{2}(\bar c_0+\bar c_1),
    \qquad
    \bar b=\frac{1}{M}\sum_{k=1}^{M}b_k .
\end{equation}
Since the same bonus $b_k$ is assigned to both views, we have
\begin{equation}
    \tilde\mu
    =
    \frac{1}{n}\sum_{v=0}^{1}\sum_{k=1}^{M}(c_{v,k}+\lambda b_k)
    =
    \frac{1}{2}(\bar c_0+\bar c_1)+\lambda\bar b
    =
    \bar c+\lambda\bar b .
\end{equation}
Therefore, the numerator of the ConsistRoll advantage admits the exact decomposition
\begin{equation}
\label{eq:consistroll-advantage-decomposition}
\begin{aligned}
    \tilde r_{v,k}-\tilde\mu
    &=
    c_{v,k}+\lambda b_k-\bar c-\lambda\bar b \\
    &=
    \underbrace{(c_{v,k}-\bar c_v)}_{\text{within-view correctness}}
    +
    \underbrace{(\bar c_v-\bar c)}_{\text{shared cross-view baseline}}
    +
    \underbrace{\lambda(b_k-\bar b)}_{\text{pair-level consistency correction}}, \\
    \tilde A_{v,k}
    &=
    \frac{
    (c_{v,k}-\bar c_v)+(\bar c_v-\bar c)+\lambda(b_k-\bar b)
    }{\tilde\sigma+\delta} .
\end{aligned}
\end{equation}
Eq.~\eqref{eq:consistroll-advantage-decomposition} reveals the finite-sample analogue of Proposition~\ref{prop:view-coupled-policy-improvement}. 
The first term is the usual within-view correctness signal. 
The second term compares each view against a shared cross-view baseline, so a transformed-view failure can be contrasted with original-view success in the same group. 
The third term is the explicit consistency correction: paired rollouts receive extra relative advantage only when both views are correct and answer-consistent. 
Thus, ConsistRoll makes the inductive bias learnable in the actual GRPO update by changing credit assignment from pointwise correctness to view-coupled correctness.

\paragraph{Connection to augmented GRPO.}
When $\lambda=0$, the explicit consistency bonus disappears. 
If the two views are still normalized in the same group, the method becomes pooled augmented GRPO: it keeps the shared cross-view baseline term $(\bar c_v-\bar c)$ but removes the pair-level consistency correction. 
If each view is normalized independently, then $\lambda=0$ reduces to standard GRPO with data augmentation. 
Therefore, the empirical gap between $\lambda=0$ and $\lambda>0$ isolates the effect of the explicit consistency-driven credit assignment.

\end{document}